\newtheorem{theorem}{Theorem}
\newtheorem{lemma}[theorem]{Lemma}
\icmltitlerunning{Learning Privately from Multiparty Data}
\begin{document} 

\twocolumn[
\icmltitle{Learning Privately from Multiparty Data} 

\icmlauthor{Jihun Hamm}{hammj@cse.ohio-state.edu}
\icmladdress{The Ohio State University, 395 Dreese Lab, 2015 Neil Ave,
            Columbus, OH 43210 USA}
\icmlauthor{Paul Cao}{yic242@eng.ucsd.edu}
\icmladdress{University of California, San Diego, 9500 Gilman Drive, 
La Jolla, CA 92093 USA}
\icmlauthor{Mikhail Belkin}{mbelkin@cse.ohio-state.edu}
\icmladdress{The Ohio State University, 395 Dreese Lab, 2015 Neil Ave,
            Columbus, OH 43210 USA}

\vskip 0.3in
]

\begin{abstract} 
Learning a classifier from private data collected by multiple parties 
is an important problem that has many potential applications.
How can we build an accurate and differentially private global classifier 
by combining locally-trained classifiers from different parties, without
access to any party's private data?
We propose to transfer the `knowledge' of the local classifier ensemble
by first creating labeled data from auxiliary unlabeled data,
and then train a global $\epsilon$-differentially private classifier.
We show that majority voting is too sensitive and therefore propose 
a new risk weighted by class probabilities estimated from the ensemble. 
Relative to a non-private solution, our private solution has a generalization 
error bounded by $O(\epsilon^{-2}M^{-2})$ where $M$ is the number of parties.
This allows strong privacy without performance loss 
when $M$ is large, such as in crowdsensing applications. 
We demonstrate the performance of our method with realistic tasks 
of activity recognition, 
network intrusion detection, and malicious URL detection. 
\end{abstract} 

\section{Introduction}
 
Consider the problem of performing machine learning with data collected
by multiple parties. 
In many settings, the parties may not wish to disclose the private information. 
For example, the parties can be medical institutions who aim to perform collaborative
research using sensitive patient information they hold. 
For another example, the parties can be computer users who aim to collectively build
a malware detector without sharing their usage data.
A conventional approach to learning from multiparty data is to first collect
data from all parties and then process them centrally.
When privacy is a major concern, this approach is not always an appropriate
solution since it is vulnerable to attacks during transmission,
storage, and processing of data.
Instead, we will consider a setting in which each party trains a {\it local} 
classifier from its private data without sending the data.
The goal is to build a {\it global} classifier
by combining local classifiers efficiently and privately. 
We expect the global classifier to be more accurate than individual local
classifiers, as it has access to more information than individual classifiers.

This problem of aggregating classifiers was considered in \cite{Pathak:2010},
where the authors proposed averaging of the {\it parameters} of local classifiers
to get a global classifier.
To prevent the leak of private information from the averaged parameters,
the authors used a differentially private mechanism.
Differential privacy measures maximal change in the probability of any outcome of 
a procedure when any item is added to or removed from a database.
It provides a strict upper bound on the privacy loss against any adversary \cite{Dwork:2004:CRYPTO,Dwork:2006:TC,Dwork:2006:ALP}.
Parameter averaging is a simple and practical procedure that can be implemented
by Secure Multiparty Computation \cite{Yao:1982:FOCS}. 
However, averaging is not applicable to classifiers with non-numerical
parameters such as decision trees, nor to a collection of different classifier types.
This raises the question if there are more flexible and perhaps better 
ways of aggregating local classifiers privately.

\begin{figure*}[th]
\centering
\includegraphics[width=0.99\linewidth]{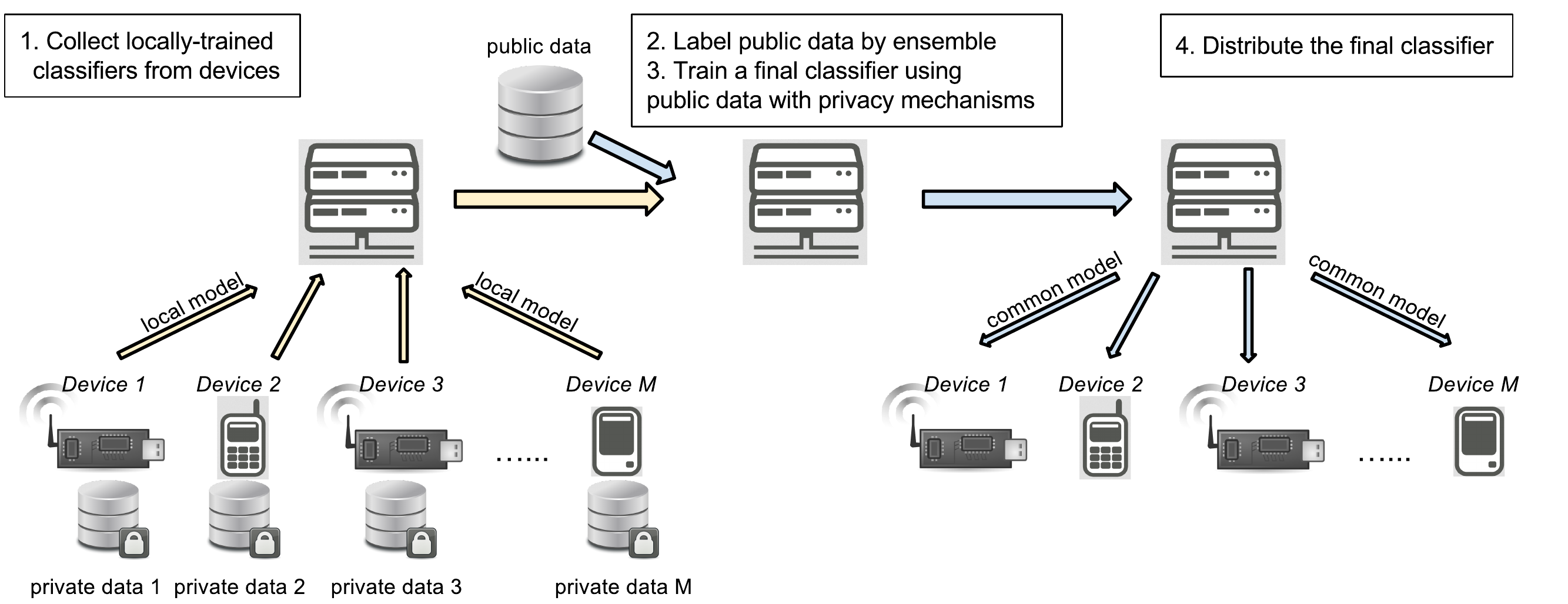}
\caption{Workflow of the proposed algorithm. 
Assume that the parties are smart devices. Each party holds a small amount of
private data and uses the data to train a local classifier. The ensemble of local
classifiers collected then generates labels for auxiliary data,
which in turn are used for training a global classifier. The final classifier is
released after sanitization for privacy.
}
\label{fig:system}
\end{figure*}

In this paper, we propose a method of building a global differentially private 
classifier from an ensemble of local classifier in two steps (see Fig.~\ref{fig:system}.)
In the first step, locally-trained classifiers are collected by a trusted entity.
A naive approach to use the collected classifiers is to release (the parameters of)
the classifiers after sanitization by differentially private mechanisms,
which is impractical (Sec.~\ref{sec:direct release}.)
Instead, we use the classifier ensemble to generates (pseudo)labels for 
auxiliary {\it unlabeled} data, thus transferring the knowledge of the ensemble
to the auxiliary data. 
In the second step, we use the labeled auxiliary data to find an empirical risk
minimizer, and release a differentially private classifier using output perturbation
\cite{Chaudhuri:2011:JMLR}. 

When generating labels for auxiliary data using an ensemble of classifiers, 
majority voting is the simplest choice. 
However, we show quantitatively that a global classifier trained from majority-voted
labels is highly sensitive to individual votes of local classifiers.
Consequently, the final classifier after differentially-private
sanitization suffer a heavy loss in its performance.
To address this, we propose a new risk insensitive to individual votes, 
where each sample is {\it weighted} by the confidence of the ensemble.
We provide an interpretation of the weighted risk in terms of random
hypothesis of an ensemble \cite{Breiman:1996} in contrast to deterministic
labeling rule of majority voting.
\if0
show that the weighted and risk 
That is, instead of using the majority-voted label as a deterministic
target concept $y = f(x)$ for some $f$,
weighted risk uses the vote count as an estimate of the class probability of a non-deterministic
target concept $y \sim P(y|x)$
where the randomness is due to the random choice of local training data.
We show that the expected value of the weighted risk is asymptotically the expected
value of the unweighted risk with a probabilistic target. 
\fi
One of our main results is in Theorem~\ref{thm:performance 2}:
we can achieve $\epsilon$-differential privacy with a generalization
error of $O(\epsilon^{-2}M^{-2})$ and $O(N^{-1})$ terms, relative to the
expected risk of a non-private solution, 
where $M$ is the number of parties and $N$ is the number of samples in auxiliary data.
This result is especially useful in a scenario where there are a large number of
parties with weak local classifiers such as a group of connected smart devices with
limited computing capability. 
We demonstrate the performance of our approach with several realistic tasks: 
activity recognition,
network intrusion detection, and malicious URL detection. 
The results show that it is feasible to achieve both accuracy and
privacy with a large number of parties.

To summarize, we propose a method of building a global differentially
private classifier from locally-trained classifiers of multiple parties without access
to their private data. The proposed method has the following advantages: 
1) it can use local classifiers of any {(mixed)} types and therefore is flexible;
2) its generalization error converges to that of a non-private solution
with a fast rate of $O(\epsilon^{-2}M^{-2})$ and $O(N^{-1})$;
3) it also provides $\epsilon$-differential privacy to all samples of a party and
not just a single sample.

In Sec.~\ref{sec:background}, we formally describe privacy definitions.
In Sec.~\ref{sec:step 1}, we discuss the first step of leveraging unlabeled data,
and in Sec.~\ref{sec:step 2}, we present the second step of finding
a global private classifier via empirical risk minimization in two different forms. 
In Sec.~\ref{sec:related work}, we discuss related works.
We evaluate the methods with real tasks in Sec.~\ref{sec:experiments} and conclude
the paper in Sec.~\ref{sec:conclusion}.
Appendix contains omitted proofs and several extensions of 
the algorithms.

\section{Preliminary} \label{sec:background}


\subsection{Differential privacy}
A randomized algorithm that takes data $\mathcal{D}$ as input 
and outputs a function $f$ is called $\epsilon$-differentially private if
\begin{equation} \label{eq:differential privacy}
\frac{P(f(\mathcal{D}) \in \mathcal{S})}
{P(f(\mathcal{D}') \in \mathcal{S})} \leq e^{\epsilon}
\end{equation}
for all measurable $\mathcal{S} \subset \mathcal{T}$ of the output range
and for all datasets $\mathcal{D}$ and $\mathcal{D}'$ differing in a single item,
denoted by $\mathcal{D} \sim \mathcal{D}'$.
That is, even if an adversary knows the whole dataset $\mathcal{D}$ except for
a single item,
she cannot infer much about the unknown item from the output $f$ of the algorithm.
When an algorithm outputs a real-valued vector $f \in \mathbb{R}^D$, its global
$L_2$ sensitivity \cite{Dwork:2006:TC} can be defined as 
\begin{equation}\label{eq:sensitivity}
S(f) = \max_{\mathcal{D}\sim\mathcal{D}'} \|f(\mathcal{D}) - f(\mathcal{D}')\|
\end{equation}
where $\|\cdot\|$ is the $L_2$ norm.
An important result from \cite{Dwork:2006:TC} 
is that a vector-valued output $f$ with sensitivity $S(f)$ can be made 
$\epsilon$-differentially private 
by perturbing $f$ with an additive noise vector $\eta$
whose density is
\begin{equation}\label{eq:sensitivity method}
P(\eta) \propto e^{-\frac{\epsilon}{S(f)}\|\eta\|}.
\end{equation} 

\subsection{Output perturbation}

When a classifier which minimizes empirical risk is released in public, 
it leaks information about the training data. 
Such a classifier can be sanitized by perturbation with additive noise
calibrated to the sensitivity of the classifier, known as output perturbation method \cite{Chaudhuri:2011:JMLR}.
Specifically, the authors show the following.
If $w_s$ is the minimizer of the regularized empirical risk
\begin{equation}
R^\lambda_S(w) = \frac{1}{N} 
\sum_{(x,y) \in S} l(h(x;w),y) + \frac{\lambda}{2}\|w\|^2,
\end{equation}
Then the perturbed output 
$w_p = w_s + \eta,\;
p(\eta) \propto e^{-\frac{N\lambda \epsilon}{2}\|\eta\|}$
is $\epsilon$-differentially private for a single sample.
Output perturbation was used to sanitize the averaged parameters in \cite{Pathak:2010}.
We also use output perturbation to sanitize global classifiers. 
One important difference of our setting to previous work is that
we consider $\epsilon$-differential privacy of all samples of a party, 
which is much stronger than $\epsilon$-differential privacy of a single sample. 

There are conditions on the loss function for this guarantee to hold.
We will assume the following conditions for our global classifier\footnote{Local classifiers are allowed to be of any type.}
similar to \cite{Chaudhuri:2011:JMLR}.
\begin{itemize}\setlength{\itemsep}{-2pt}
\item The loss-hypothesis has a form $l(h(x;w),v)) = l(vw^T\phi(x))$, 
where $\phi:\mathcal{X} \to \mathbb{R}^d$ is a fixed map.
We will consider only linear classifiers $l(vw^Tx)$, where any nonlinear map 
$\phi$ is absorbed into the $d$-dimensional features. 
\item The surrogate loss $l(\cdot)$ is convex and continuously differentiable.
\item The derivative $l'(\cdot)$ is bounded: 
$|l'(t)| \leq 1,\; \forall t \in \mathbb{R}$, 
and $c$-Lipschitz: 
$|l'(s)-l'(t)| \leq c |s-t|,\;\forall s,t \in \mathbb{R}.$
\item The features are bounded: $\sup_{x\in \mathcal{X}} \|x\| \leq 1$.
\end{itemize}
These conditions are satisfied by, e.g., logistic regression loss ($c=1/4$)
and approximate hinge loss. 
\if0
\begin{equation}
l(t) = \left\{\begin{array}{cc} 
0& \mathrm{for}\; t > 1 + h\\
1-t & \mathrm{for}\; t < 1-h\\
\frac{1}{4h}(1+h-t)^2& \mathrm{otherwise}}\end{array}\right. .
\end{equation}
with $c= 1/(2h)$.
\fi

\section{Transferring knowledge of ensemble} \label{sec:step 1}

\subsection{Local classifiers}

In this paper, we treat local classifiers as $M$ black boxes
$h_1(x),..., h_M(x)$.
We assume that a local classifier $h_i(x)$ is trained using its private i.i.d.
training set $S^{(i)}$  
\begin{equation}\label{eq:private data}
S^{(i)} = \{(x^{(i)}_1,y^{(i)}_1),\;...\;,(x^{(i)}_{N_i},y^{(i)}_{N_i})\},
\end{equation}
where $(x^{(i)}_j,y^{(i)}_j) \in \mathcal{X}\times\{-1,1\}$ is a sample from
a distribution $P(x,y)$ common to all parties.
We consider binary labels $y\in\{-1,1\}$ in the main paper, and present
a multiclass extension in Appendix~\ref{sec:appendix b}.

This splitting of training data across parties is similar to the 
Bagging procedure \cite{Breiman:1996} with some differences.
In Bagging, the training set $S^{(i)}$ for party $i$ is sampled {\it with replacement}
from the whole dataset $\mathcal{D}$, whereas in our setting, the training set
is sampled {\it without replacement} from $\mathcal{D}$,
more similar to the Subagging \cite{Politis:1999} procedure.

\if0
Algorithm~\ref{alg:local} summarizes this procedure.
\begin{algorithm}[tb] \caption{Local classifier trainining} \label{alg:local}
{\bf Local classifier training}\\
{Input}: $X^{\mathrm{priv}}=\{(x^i_j,y^i_j)\},\;\;i=1,...,M, j=1,...,N_i$ \\
{Output}: $h_1(x),...,h_M(x)$ 
\begin{algorithmic}
	\FOR{each party $i=1,\;...\;,M$}
    \STATE{Train a classifier $h_i(x)$ (of any type) using its local data \\
    $S^{(i)} =\{(x^i_1,y^i_1),...,(x^i_N,y^i_N)\}$} 
    \ENDFOR
    \STATE{Use a trusted server to collect $h_1,...,h_M$}
\end{algorithmic}
\end{algorithm}
\fi

\subsection{Privacy issues of direct release}\label{sec:direct release}

In the first step of our method, local classifiers from multiple parties 
are first collected by a trusted entity. 
A naive approach to use the ensemble is to directly release
the local classifier parameters to the parties after appropriate sanitization.
However, this is problematic in efficiency and privacy.
Releasing all $M$ classifier parameters is an operation with a constant sensitivity,
as opposed to releasing insensitive statistics such as an average whose sensitivity
is $O(M^{-1})$.
Releasing the classifiers requires much stronger perturbation than necessary, 
incurring steep loss of performance of sanitized classifiers.
Besides, efficient differentially private mechanisms are known only for
certain types of classifiers so far (see \cite{Ji:2014} for a review.) 
Another approach is to use the ensemble as a service to make predictions
for test data followed by appropriate sanitization. 
Suppose we use majority voting to provide a prediction for a test sample. 
A differentially private mechanism such as Report Noisy Max \cite{Dwork:2013:TCS}
can be used to sanitize the votes for a {\it single} query.
However, answering several queries requires perturbing all answers with noise 
linearly proportional to the number of queries, 
which is impractical in most realistic settings.

\subsection{Leveraging auxiliary data}

To address the problems above, we propose to transfer the knowledge of
the ensemble to a global classifier using auxiliary unlabeled data. 
More precisely, we use the ensemble to generate (pseudo)labels for the auxiliary
data, which in turn are used to train a global classifier.
Compared to directly releasing local classifiers, releasing a global classifier
trained on auxiliary data is a much less sensitive operation with $O(M^{-1})$ 
(Sec.~\ref{sec:performance 2}) analogous to releasing an average statistic.
The number of auxiliary samples does not affect privacy, and in fact the
larger the data the closer the global classifier is to the original ensemble 
with $O(N^{-1})$ bound (Sec.~\ref{sec:performance 2}). 
Also, compared to using the ensemble to answer prediction queries, 
the sanitized global classifier can be used as many times as needed without
its privacy being affected. 

We argue that the availability of auxiliary unlabeled data is not an issue, 
since in many settings they are practically much easier to collect than labeled data.
Furthermore, if the auxiliary data are obtained from public repositories,
privacy of such data is not an immediate concern. 
We mainly focus on the privacy of local data,  and discuss extensions 
for preserving the privacy of auxiliary data in Sec.~\ref{sec:extension}.

\section{Finding a global private classifier}\label{sec:step 2}

We present details of training a global private classifier.
As the first attempt, we use majority voting of an ensemble to assign labels
to auxiliary data, and find a global classifier from the usual ERM procedure.  
In the second attempt, we present a better approach where we use the ensemble 
to estimate the posterior $P(y|x)$ of the auxiliary samples 
and solve a `soft-labeled' weighted empirical minimization.

\subsection{First attempt: ERM with majority voting}\label{sec:vote}

As the first attempt, we use majority voting of $M$ local classifiers
to generate labels of auxiliary data, and analyze its implications.
Majority voting for binary classification is the following rule
\begin{equation}\label{eq:voting}
v(x) = \left\{\begin{array}{cc}
 1,&\mathrm{if}\;\;\;\sum_{i=1}^M I[h_i(x)=1] \geq \frac{M}{2}\\
-1,&\mathrm{otherwise}
\end{array}\right. .
\end{equation}
Ties can be ignored by assuming an odd number $M$ of parties.
Regardless of local classifier types or how they are trained, 
we can consider the majority vote of the ensemble $\{h_1,...,h_M\}$
as a {\it deterministic} target concept to train a global classifier.

The majority-voted auxiliary data are
\begin{equation} \label{eq:dataset 1}
S = \{(x_1,v(x_1)),\;...,\;(x_N,v(x_N))\},
\end{equation}
where $x_i \in \mathcal{X}$ is an i.i.d. sample from the same distribution $P(x)$
as the private data.
We train a global classifier by minimizing the (regularized) empirical risk associated
with a loss and a hypothesis class:
\begin{equation}\label{eq:regularized empirical risk}
R^\lambda_S(w) = \frac{1}{N} 
\sum_{(x,v) \in S} l(h(x;w),v) + \frac{\lambda}{2}\|w\|^2.
\end{equation}
The corresponding expected risks with and without regularization are 
\begin{equation}
R^\lambda(w) = E_{x} [l(h(x;w),v(x))] +\frac{\lambda}{2}\|w\|^2,
\end{equation}
and
\begin{equation}
R(w) = E_{x} [l(h(x;w),v(x))].
\end{equation}
Algorithm~\ref{alg:vote} summarizes the procedure.

\begin{algorithm}[tb] \caption{DP Ensemble by Majority-voted ERM} \label{alg:vote}
{Input}: $h_1,...,h_M$ (local classifiers),
 $X$ (auxiliary unlabeled samples), $\epsilon$, $\lambda$\\
{Output}: $w_p$ \\
{Begin}
\begin{algorithmic}
	\FOR{$i=1,\;...\;,N$}
	\STATE{Generate majority voted labels $v(x_i)$ by (\ref{eq:voting})}
	\ENDFOR
    \STATE{Find the minimizer $w_s$ of (\ref{eq:regularized empirical risk})
    with $S=\{(x_i,v(x_i))\}$}
    \STATE{Sample a random vector $\eta$ from
		$p(\eta) \propto e^{-0.5\lambda\epsilon \|\eta\|}$}
    \STATE{Output $w_p = w_s + \eta$}
\end{algorithmic}
\end{algorithm}

Applying output perturbation to our multiparty setting gives us the following result. 
\begin{theorem}\label{thm:privacy 1}
The perturbed output $w_p = w_s + \eta$ from Algorithm~\ref{alg:vote} with
$p(\eta) \propto e^{-\frac{\lambda \epsilon}{2}\|\eta\|}$ 
is $\epsilon$-differentially private.
\end{theorem}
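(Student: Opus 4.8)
The plan is to reduce the statement to the sensitivity method of~(\ref{eq:sensitivity method}): since Algorithm~\ref{alg:vote} is just the deterministic map (private data) $\mapsto w_s$ followed by additive noise, it suffices to bound the global $L_2$ sensitivity $S(w_s)$ of~(\ref{eq:sensitivity}) and check that the prescribed noise $p(\eta)\propto e^{-\frac{\lambda\epsilon}{2}\|\eta\|}$ is exactly $e^{-\frac{\epsilon}{S(w_s)}\|\eta\|}$ (or a heavier-tailed version, which is also fine). First I would pin down the neighboring relation $\mathcal{D}\sim\mathcal{D}'$: the private data enters the algorithm only through $h_1,\dots,h_M$, and party $i$ trains $h_i$ from $S^{(i)}$ alone, so replacing one party's dataset changes at most a single local classifier $h_i\mapsto h_i'$. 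Its only effect on the objective~(\ref{eq:regularized empirical risk}) is through the voted labels: for an auxiliary point whose vote count sits at the majority threshold, flipping one ballot flips $v(x_j)$, so in the worst case \emph{every} one of the $N$ labels changes at once.

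Next I would bound $S(w_s)$. Let $R^\lambda_S$ and $R'^\lambda_S$ be the regularized objectives built from the label sets $\{v(x_j)\}$ and $\{v'(x_j)\}$, and let $g = R'^\lambda_S - R^\lambda_S$. The regularizer cancels, so $g(w) = \frac{1}{N}\sum_{j=1}^{N}\big[\,l(v'(x_j)w^T x_j) - l(v(x_j)w^T x_j)\,\big]$, which is differentiable by the assumed continuous differentiability of $l$. Both $R^\lambda_S$ and $R'^\lambda_S$ are $\lambda$-strongly convex (convex loss plus $\frac{\lambda}{2}\|w\|^2$), so the standard minimizer-perturbation argument used in \cite{Chaudhuri:2011:JMLR} applies: from $\nabla R^\lambda_S(w_s')-\nabla R^\lambda_S(w_s) = -\nabla g(w_s')$ together with $\lambda$-strong convexity of $R^\lambda_S$ one gets $\|w_s - w_s'\| \le \frac{1}{\lambda}\sup_w\|\nabla g(w)\|$. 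For the gradient, $\nabla_w\, l(v w^T x) = v\,l'(vw^T x)\,x$, so each of the at most $N$ changed terms contributes a vector of norm at most $|l'(\cdot)|\,\|x_j\| + |l'(\cdot)|\,\|x_j\| \le 2$, using $|v|=1$, $|l'|\le 1$, and $\|x_j\|\le 1$; summing over $j$ and dividing by $N$ gives $\sup_w\|\nabla g(w)\| \le 2$, hence $S(w_s) \le 2/\lambda$.

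Finally I would invoke~(\ref{eq:sensitivity method}) with this (upper bound on the) sensitivity: perturbing $w_s$ by $\eta$ with $p(\eta)\propto e^{-\frac{\epsilon}{2/\lambda}\|\eta\|} = e^{-\frac{\lambda\epsilon}{2}\|\eta\|}$ yields an $\epsilon$-differentially private output, which is precisely the noise drawn in Algorithm~\ref{alg:vote}; this gives the theorem, with the privacy guarantee holding at the granularity of an entire party's dataset as claimed. The only step requiring care is the sensitivity bound: one should resist chaining $N$ single-label perturbations (which would force tracking a sequence of intermediate minimizers) and instead bound $\|\nabla g\|$ directly. The quantitative moral is that because majority voting can flip all $N$ auxiliary labels simultaneously, the per-sample sensitivity $O(1/(N\lambda))$ gets multiplied by $N$, the $1/N$ cancels, and the resulting bound $2/\lambda$ does \emph{not} decay with the number of parties $M$ — this is the precise sense in which voting is ``too sensitive'' and is what the weighted risk in the next subsection is designed to fix.
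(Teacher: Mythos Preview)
Your proposal is correct and follows essentially the same route as the paper's own proof: identify that a neighboring party change can flip at most one local classifier and hence (in the worst case) all $N$ voted labels, then use the $\lambda$-strong convexity / gradient-difference bound from \cite{Chaudhuri:2011:JMLR} to get $\|w_s-w_{s'}\|\le \frac{1}{\lambda}\sup_w\|\nabla g(w)\|\le \frac{2}{\lambda}$, and conclude via the sensitivity mechanism~(\ref{eq:sensitivity method}). Your added remark that the $1/N$ cancels---so the bound does not improve with $M$---is exactly the point the paper extracts in Sec.~\ref{sec:performance issues}.
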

The proof of the theorem and others are in the Appendix~\ref{sec:appendix a}.

\subsection{Performance issues of majority voting}\label{sec:performance issues}

We briefly discuss the generalization error of majority-voted ERM. 
In \cite{Chaudhuri:2011:JMLR}, it is shown that the expected risk of
an output-perturbed ERM solution $w_p$ with respect to the risk of any reference hypothesis 
$w_0$ is bounded by two terms -- one due to noise
and another due to the gap between expected and empirical regularized risks.
This result is applicable to the majority-voted ERM with minor modifications.
The sensitivity of majority-voted ERM from Theorem~\ref{thm:privacy 1} is 
$\frac{2}{\lambda}$ compared to $\frac{2}{N\lambda}$ of a standard ERM,
and corresponding the error bound is
\begin{equation}\label{eq:performance 1}
R(w_p) \leq R(w_0) + O(\epsilon^{-2}) + O(N^{-1}),
\end{equation}
with high probability, ignoring other variables. 
Unfortunately, the bound does not guarantee a successful learning due to the constant
gap $O(\epsilon^{-2})$,
which can be large for a small $\epsilon$.

What causes this is the worst-case scenario of multiparty voting.
Suppose the votes of $M-1$ local classifiers are exactly ties 
for all auxiliary samples $\{x_1,...,x_N\}$. 
If we replace a local classifier $h_i(x)$ with the `opposite' classifier $h_i'(x)=-h_i(x)$, then the majority-voted labels $\{v_1,...,v_N\}$ become
$\{-v_1,...,-v_N\}$, and the resultant global classifier is entirely different.
However unlikely this scenario may be in reality, differential privacy requires that we calibrate 
our noise to the worst case sensitivity.


\subsection{Better yet: weighted ERM with soft labels}\label{sec:soft}

The main problem with majority voting was its sensitivity to the decision of a single party.
Let $\alpha(x)$ be the fraction of positive votes from $M$ classifiers given a
sample $x$: 
\begin{equation}\label{eq:alpha}
\alpha(x) = \frac{1}{M} \sum_{j=1}^M I[h_j(x)=1].
\end{equation}
In terms of $\alpha$, the original loss $l(w^Tx v(x))$ for majority voting
can be written as
\begin{eqnarray}
l(yw^Tx) &=& I[\alpha(x)\geq0.5]\;l(w^Tx) \nonumber \\
&&\;\;+\;\;I[\alpha(x)<0.5]\;l(-w^Tx), 
\end{eqnarray}
which changes abruptly when the fraction $\alpha(x)$ crosses the boundary $\alpha=0.5$.
We remedy the situation by introducing the new {\it weighted} loss:
\begin{equation}\label{eq:weighted loss}
l^\alpha(\cdot) = \alpha(x)l(w^Tx) + (1-\alpha(x))l(-w^Tx).
\end{equation}
The new loss has the following properties.
When the $M$ votes given a sample $x$ are unanimously positive (or negative),
then the weighted loss is $l^\alpha(\cdot) = l(w^Tx)$ (or $l(-w^Tx)$),
same as the original loss.
If the votes are almost evenly split between positive and negative, then 
the weighted loss is
$l^\alpha(\cdot) \simeq 0.5\;l(w^Tx) + 0.5\;l(-w^Tx)$ which is insensitive to 
the change of label by a single vote, unlike the original loss. 
Specifically, a single vote can change $l^\alpha(\cdot)$ only by a factor of $1/M$ 
(see Proof of Theorem~\ref{thm:privacy 2}.)

We provide a natural interpretation of $\alpha(x)$ and the weighted loss in the following.
For the purpose of analysis, assume that the local classifiers 
$h_1(x),...,h_M(x)$ are from the same hypothesis class.\footnote{Our differential privacy guarantee holds whether they are from
the same hypothesis class or not.}
Since the local training data are i.i.d. samples from $P(x,y)$, 
the local classifiers $\{h_1(x),...,h_M(x)\}$ can be considered
random hypotheses, as in \cite{Breiman:1996}.
Let $Q(j|x)$ be the probability of such a random hypothesis $h(x)$ predicting 
label $j$ given $x$:
\begin{equation}
Q(j|x) = P(h(x)=j|x),
\end{equation} 
Then the fraction $\alpha(x)=\frac{1}{M} \sum_{j=1}^M I[h_j(x)=1]$ is an unbiased estimate of $Q(1|x)$. 
Furthermore, the weighted loss is directly related to the unweighted loss:
\begin{lemma} \label{lemma:equivalence}
For any $w$, the expectation of the weighted loss (\ref{eq:weighted loss}) is 
asymptotically the expectation of the unweighted loss: 
\begin{equation}
\lim_{M\to \infty} E_x[l^\alpha(w)] = E_{x,v}[l(w^Tx v)].
\end{equation}
\end{lemma}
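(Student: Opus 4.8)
The plan is to separate the two independent sources of randomness on the left-hand side --- the draw of the auxiliary point $x\sim P(x)$ and the draw of the ensemble $h_1,\dots,h_M$ through their private training sets --- and then push the limit $M\to\infty$ through the expectation over $x$. I will read $E_x[l^\alpha(w)]$ as the expectation over $x$ alone, so that for finite $M$ it is still a random quantity depending on the realized ensemble; the asserted limit is then to be understood almost surely over the draw of the ensemble (one could equally state it in probability). I also note that in this random-hypothesis view the target $v$ on the right-hand side is the label produced by a \emph{single} random hypothesis, i.e.\ $v \mid x \sim Q(\cdot\mid x)$, rather than the majority vote.

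First I would expand the weighted loss using its definition (\ref{eq:weighted loss}) and linearity of $E_x$:
\[
E_x[l^\alpha(w)] = E_x\big[\alpha(x)\,l(w^Tx)\big] + E_x\big[(1-\alpha(x))\,l(-w^Tx)\big],
\]
so the whole statement reduces to understanding $\alpha(x)$. For a fixed $x$ the indicators $I[h_1(x)=1],\dots,I[h_M(x)=1]$ appearing in (\ref{eq:alpha}) are i.i.d.\ Bernoulli with mean $Q(1\mid x)$, so by the strong law of large numbers $\alpha(x)\to Q(1\mid x)$ almost surely as $M\to\infty$. A Fubini argument upgrades this to: for almost every realization of the infinite sequence $h_1,h_2,\dots$, one has $\alpha(x)\to Q(1\mid x)$ for $P(x)$-almost every $x$, hence $l^\alpha(w)\to Q(1\mid x)\,l(w^Tx)+Q(-1\mid x)\,l(-w^Tx)$ pointwise in $x$.

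Next I would use boundedness to interchange the limit and $E_x$. Since $|l'(t)|\le 1$ the loss is $1$-Lipschitz, so $|l(\pm w^Tx)|\le |l(0)|+\|w\|\,\|x\|\le |l(0)|+\|w\|$ by the feature bound $\sup_x\|x\|\le 1$; thus $|l^\alpha(w)|$ is bounded by a finite constant depending only on $w$, uniformly in $M$ and $x$. The bounded convergence theorem then yields
\[
\lim_{M\to\infty} E_x[l^\alpha(w)] = E_x\big[Q(1\mid x)\,l(w^Tx) + Q(-1\mid x)\,l(-w^Tx)\big].
\]
Finally, conditioning on $x$ and integrating over $v\sim Q(\cdot\mid x)$ gives $E_v[l(vw^Tx)\mid x] = Q(1\mid x)\,l(w^Tx)+Q(-1\mid x)\,l(-w^Tx)$, so the right-hand side above equals $E_{x,v}[l(vw^Tx)]$, which is exactly the claim.

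I expect the only real obstacle to be the bookkeeping around the two randomness sources --- making precise in what sense the limit holds and justifying the Fubini step that turns the per-$x$ SLLN into almost-sure convergence of the function $\alpha(\cdot)$ --- while the analytic core (a Bernoulli SLLN plus bounded convergence, both immediate from the Lipschitz and feature-bound assumptions already imposed on $l$ and $x$) is routine. I would also flag the alternative reading in which $E_x$ additionally averages over the ensemble: by unbiasedness of $\alpha(x)$ the identity then holds exactly for every finite $M$ and no limit is needed, but I would present the genuine-limit version since it is the one consistent with the ``$\alpha$ estimates $Q$'' framing in the text.
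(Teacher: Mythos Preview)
Your proof is correct and follows essentially the same route as the paper's: law of large numbers to get $\alpha(x)\to Q(1\mid x)$, then swap the limit with $E_x$ using boundedness, and finally identify the result with $E_{x,v}[l(vw^Tx)]$ by conditioning on $x$. If anything, you are more careful than the paper about disentangling the two sources of randomness and justifying the limit exchange; the paper simply invokes ``bounded $\alpha$ and $l$ for all $x\in\mathcal{X}$'' at that step.
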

\begin{proof}
The expected risk $E_{x,v} [l(vw^Tx)]$ is
\begin{eqnarray}
 &=&  E_x E_{v|x} [l(vw^Tx)] \nonumber\\
&=& E_x[Q(1|x)l(w^T x)+ Q(-1|x) l(-w^T x)]\nonumber \\
&=& E_x[\lim_{M\to\infty} \alpha(x) l(w^T x)+(1-\lim_{M\to\infty} \alpha(x)) l(-w^T x)]\nonumber\\
&& \;\;\;(\mathrm{the\;law\;of\;large\; numbers})\nonumber\\
&=& \lim_{M\to\infty} E_x[\alpha(x)l(w^T x)+ (1-\alpha(x)) l(-w^T x)]\nonumber \\
&& \;\;\;(\mathrm{bounded}\;\alpha\;\mathrm{and}\;l\;\mathrm{for}\;\forall x\in\mathcal{X})\nonumber\\
&=& \lim_{M\to\infty} E_x[l^\alpha(w)].
\end{eqnarray}
\end{proof}
This shows that minimizing the expected weighted loss is asymptotically the same
as minimizing the standard expected loss, when the target $v$ is a
{\it probabilistic} concept from $P(h(x)=v)$ of the random hypothesis,
as opposed to a deterministic concept $v(x)$ from majority voting.

The auxiliary dataset with `soft' labels is now
\begin{equation} \label{eq:dataset 2}
S = \{(x_1,\alpha(x_1)),\;...\;,(x_N,\alpha(x_N))\}.
\end{equation}
where $x_i \in \mathcal{X}$ is an i.i.d. sample from the same distribution $P(x)$
as the private data, and $0 \leq \alpha \leq 1$.
Note that we are not trying to learn a regression function $\mathcal{X} \to [0,1]$ but 
to learn a classifier $\mathcal{X} \to \{-1,1\}$ using $\alpha$ as a {real-valued}
oracle on $P(y=1|x)$. 
Consequently, we find a global classifier by minimizing the regularized {\it weighted} 
empirical risk 
\begin{equation}\label{eq:weighted regularized empirical risk}
R^\lambda_{S}(w)
= \frac{1}{N} \sum_{i=1}^N l^\alpha(h(x_i;w),\alpha_i) + \frac{\lambda}{2}\|w\|^2,
\end{equation}
where $\alpha_i = \alpha(x_i)$. 
The corresponding expected risks with and without regularization are 
\begin{equation}\label{eq:weighted regularized risk}
R^\lambda(w) = E_{x} [l^\alpha(h(x;w),\alpha(x))] +\frac{\lambda}{2}\|w\|^2,
\end{equation}
and
\begin{equation}
R(w) = E_{x} [l^\alpha(h(x;w),\alpha(x))].
\end{equation}

We again use output perturbation to make the classifier differentially
private as summarized in Algorithm~\ref{alg:soft}.

\begin{algorithm}[tb] \caption{DP Ensemble by Weighted ERM} \label{alg:soft}
{Input}: $h_1,...,h_M$ (local classifiers), $X$ (auxiliary unlabeled samples), $\epsilon$, $\lambda$\\
{Output}: $w_p$ \\
{Begin}
\begin{algorithmic}
	\FOR{$i=1,\;...\;,N$}
	\STATE{Compute $\alpha(x_i)$ by (\ref{eq:alpha})}
	\ENDFOR
    \STATE{Find the minimizer of $w_s$ of (\ref{eq:weighted regularized empirical risk}) with $\{(x_i,\alpha(x_i))\}$}
    \STATE{Sample a random vector $\eta$ from
		$p(\eta) \propto e^{-0.5M\lambda\epsilon \|\eta\|}$}
    \STATE{Output $w_p = w_s + \eta$}
\end{algorithmic}
\end{algorithm}

\subsection{Privacy and performance}\label{sec:performance 2}

Compared to Theorem~\ref{thm:privacy 1} for majority-voted ERM with a noise of
$P(\eta) \propto e^{-\frac{\lambda\epsilon}{2}\|\eta\|}$, 
we have the following result:
\begin{theorem}\label{thm:privacy 2}
The perturbed output $w_p = w_s + \eta$ from Algorithm~\ref{alg:soft} with
$p(\eta) \propto e^{-\frac{M \lambda \epsilon}{2}\|\eta\|}$ 
is $\epsilon$-differentially private.
\end{theorem}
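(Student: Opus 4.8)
The plan is to reduce Theorem~\ref{thm:privacy 2} to the output perturbation guarantee of \cite{Chaudhuri:2011:JMLR} by bounding the sensitivity of the weighted empirical risk minimizer $w_s$ when an entire party's data is changed. The key quantity to control is how much the minimizer $w_s$ of (\ref{eq:weighted regularized empirical risk}) can move when we replace one local classifier $h_i$ (trained on party $i$'s private data $S^{(i)}$) with an arbitrary alternative classifier $h_i'$. The crucial observation, already flagged in the statement of the weighted loss, is that replacing $h_i$ by $h_i'$ changes each $\alpha(x_j) = \frac{1}{M}\sum_k I[h_k(x_j)=1]$ by at most $1/M$: $|\alpha(x_j) - \alpha'(x_j)| \le 1/M$ for every auxiliary point $x_j$. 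This is the essential gain over majority voting, where a single vote can flip $v(x_j)$ entirely.

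First I would make precise the perturbation to the objective. Let $g(w) = R^\lambda_S(w)$ and $g'(w) = R^\lambda_{S'}(w)$ be the regularized weighted empirical risks under $\alpha$ and $\alpha'$ respectively, with minimizers $w_s$ and $w_s'$. Since $l^\alpha(w) = \alpha\, l(w^Tx) + (1-\alpha)\, l(-w^Tx)$ is linear in $\alpha$, the difference $l^{\alpha}(w) - l^{\alpha'}(w) = (\alpha - \alpha')\bigl(l(w^Tx) - l(-w^Tx)\bigr)$, so the gradient difference at any $w$ is $(\alpha-\alpha')\bigl(l'(w^Tx) + l'(-w^Tx)\bigr)x$. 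Using $|l'| \le 1$ and $\|x\| \le 1$, the per-sample gradient difference has norm at most $2|\alpha-\alpha'| \le 2/M$, hence averaging over the $N$ auxiliary samples, $\|\nabla g(w) - \nabla g'(w)\| \le 2/M$ uniformly in $w$.

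Second I would invoke the standard strong-convexity argument for sensitivity of regularized ERM (the same one used in \cite{Chaudhuri:2011:JMLR}). Both $g$ and $g'$ are $\lambda$-strongly convex because of the $\frac{\lambda}{2}\|w\|^2$ term and convexity of $l$. Applying first-order optimality ($\nabla g(w_s) = 0$, $\nabla g'(w_s') = 0$) together with strong convexity gives $\lambda \|w_s - w_s'\|^2 \le \langle \nabla g(w_s) - \nabla g(w_s'), w_s - w_s'\rangle = \langle \nabla g'(w_s') - \nabla g(w_s'), w_s - w_s'\rangle \le \|\nabla g(w_s') - \nabla g'(w_s')\|\, \|w_s - w_s'\| \le \frac{2}{M}\|w_s - w_s'\|$, so $\|w_s - w_s'\| \le \frac{2}{M\lambda}$. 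This is the $L_2$ sensitivity $S(w_s)$. Plugging $S(w_s) = 2/(M\lambda)$ into the sensitivity method (\ref{eq:sensitivity method}), the noise density $P(\eta) \propto e^{-\frac{\epsilon}{S(w_s)}\|\eta\|} = e^{-\frac{M\lambda\epsilon}{2}\|\eta\|}$ yields $\epsilon$-differential privacy, which is exactly the noise prescribed in Algorithm~\ref{alg:soft}. I would also note that this argument is unchanged whether the $h_k$ are from a common hypothesis class or not, and that changing a party's data can only alter that party's own classifier $h_i$, so one item (a whole party's dataset) changing corresponds to exactly one summand in the definition of $\alpha$ changing — this is where the stronger "privacy of all samples of a party" claim comes from.

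The main obstacle, and the step requiring the most care, is the uniform gradient bound $\|\nabla g(w) - \nabla g'(w)\| \le 2/M$: one must be careful that this holds for \emph{all} $w$ (not just at the optima), that the $c$-Lipschitz condition on $l'$ is not actually needed here (only the bound $|l'|\le 1$, since $\alpha$ enters linearly), and that the factor of $2$ rather than $1$ is correct — it comes from the two terms $l(w^Tx)$ and $l(-w^Tx)$ each contributing a derivative of magnitude up to $1$. A secondary subtlety is confirming that replacing one party's entire private dataset changes at most one of the $M$ indicator terms in $\alpha(x)$, which is immediate from the black-box assumption that $h_i$ depends only on $S^{(i)}$; this is what makes the per-sample change in $\alpha$ exactly $1/M$ in the worst case rather than something larger.
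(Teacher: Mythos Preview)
Your proposal is correct and follows essentially the same route as the paper's proof: bound $|\alpha(x_j)-\alpha'(x_j)|\le 1/M$, use $|l'|\le 1$ and $\|x\|\le 1$ to get a uniform $2/M$ bound on the gradient difference, and then use $\lambda$-strong convexity to conclude $\|w_s-w_s'\|\le 2/(M\lambda)$. The only cosmetic difference is that the paper lets $g$ denote the \emph{difference} $R^\lambda_S-R^\lambda_{S'}$ and cites Corollaries~7--8 of \cite{Chaudhuri:2011:JMLR} for the sensitivity inequality, whereas you keep $g,g'$ as the two risks and write out the strong-convexity step explicitly.
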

That is, we now require $1/M$ times smaller noise to 
achieve the same $\epsilon$-differential privacy.
This directly impacts the performance of the corresponding global classifier 
as follows.
\begin{theorem}\label{thm:performance 2}
Let $w_0$ be any reference hypothesis. Then
with probability of at least $1-\delta_p-\delta_s$
over the privacy mechanism $(\delta_p)$ and over the choice of samples $(\delta_s)$,
\begin{eqnarray}
R(w_p) &\leq& R(w_0) 
+\frac{4d^2(c+\lambda) \log^2(d/\delta_p)}{\lambda^2 M^2 \epsilon^2} \nonumber \\
&& + \frac{16(32+\log(1/\delta_s))}{\lambda N} + \frac{\lambda}{2}\|w_0\|^2. \label{eq:performance 2}
\end{eqnarray}
\end{theorem}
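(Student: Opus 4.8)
The plan is to bound the excess risk $R(w_p) - R(w_0)$ by splitting it into three pieces along the standard output-perturbation pathway of \cite{Chaudhuri:2011:JMLR}: (i) the gap between the perturbed solution $w_p$ and the exact minimizer $w_s$ of the regularized \emph{weighted} empirical risk, controlled by the magnitude of the noise vector $\eta$; (ii) the gap between $w_s$ and $w_0$ in terms of the \emph{empirical} regularized risk, which is nonpositive by optimality of $w_s$; and (iii) the uniform-convergence gap between empirical and expected regularized risk over the relevant class of hypotheses. Adding these and dropping the regularizer on $w_p$ at the end gives the stated bound.

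First I would set up step (i). Because $l^\alpha(w) = \alpha\, l(w^Tx) + (1-\alpha)\, l(-w^Tx)$ is a convex combination of shifted copies of the convex, $1$-bounded-derivative loss $l$, its gradient in $w$ has norm at most $\|x\| \le 1$, and its $w$-Hessian is controlled by the $c$-Lipschitz constant of $l'$; hence $R^\lambda_S$ is $\lambda$-strongly convex with a $(c+\lambda)$-Lipschitz gradient. Strong convexity at the minimizer $w_s$ gives $\|w_p - w_s\| = \|\eta\|$ and then $R^\lambda_S(w_p) - R^\lambda_S(w_s) \le \tfrac{c+\lambda}{2}\|\eta\|^2$ (via a Taylor expansion / smoothness bound); the same smoothness transfers the bound from empirical to expected regularized risk up to the uniform term. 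The key quantitative input is the tail of $\|\eta\|$: from Theorem~\ref{thm:privacy 2} the noise density is $p(\eta)\propto e^{-\tfrac{M\lambda\epsilon}{2}\|\eta\|}$, i.e. a Gamma-type norm with scale $\tfrac{2}{M\lambda\epsilon}$ in dimension $d$, so with probability $1-\delta_p$ one has $\|\eta\| = O\!\big(\tfrac{d\log(d/\delta_p)}{M\lambda\epsilon}\big)$. Squaring yields the $O\!\big(\tfrac{d^2(c+\lambda)\log^2(d/\delta_p)}{\lambda^2 M^2\epsilon^2}\big)$ term — this is exactly where the $M^{-2}\epsilon^{-2}$ rate comes from, inherited from the $M$-fold reduction in sensitivity.

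Next, step (iii): I would invoke a uniform-convergence bound for the weighted loss class $\{x \mapsto l^\alpha(h(x;w),\alpha(x)) : \|w\| \le \text{(radius implied by regularization)}\}$. Since $l^\alpha$ is a convex combination of two instances of the Lipschitz loss $l$ evaluated at linear functions of bounded-norm $x$, standard Rademacher/covering arguments (as in \cite{Chaudhuri:2011:JMLR}, and using that the minimizer of a $\lambda$-regularized risk has norm $O(1/\sqrt\lambda)$ since $l\ge 0$) give, with probability $1-\delta_s$, a two-sided bound of order $O\!\big(\tfrac{32+\log(1/\delta_s)}{\lambda N}\big)$ on the supremum of $|R^\lambda_S(w) - R^\lambda(w)|$; the explicit constant $16$ in the theorem is just tracking the constants through that argument. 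Combining: $R(w_p) \le R^\lambda_S(w_p) \le R^\lambda_S(w_s) + \tfrac{c+\lambda}{2}\|\eta\|^2 \le R^\lambda_S(w_0) + (\text{noise term}) \le R^\lambda(w_0) + (\text{unif. term}) + (\text{noise term}) = R(w_0) + \tfrac{\lambda}{2}\|w_0\|^2 + (\text{two terms})$, which is (\ref{eq:performance 2}).

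The main obstacle is step (i) done carefully: one must verify that the \emph{weighted} loss $l^\alpha$ genuinely inherits the regularity conditions (convexity, bounded and Lipschitz gradient) that \cite{Chaudhuri:2011:JMLR} assume for a plain loss-hypothesis pair $l(vw^Tx)$ — the subtlety being that $l^\alpha$ is not of the form $l(vw^Tx)$ for any single $v$, so the cited lemmas do not apply verbatim and must be re-derived for a convex combination of two such terms. Once that is in place, and once one confirms that the radius of the effective hypothesis ball (hence the uniform-convergence constant) is unaffected by the weighting, the rest is bookkeeping: plug in the $\|\eta\|$ tail bound from the Gamma distribution of dimension $d$, use optimality of $w_s$, apply the union bound over $\delta_p$ and $\delta_s$, and discard the $\tfrac{\lambda}{2}\|w_p\|^2 \ge 0$ term to pass from $R^\lambda$ back to $R$ on the left-hand side.
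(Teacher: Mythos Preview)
Your step (i) is essentially the paper's Lemma~\ref{lemma:noise}: verify that $R^\lambda_S$ is $(c+\lambda)$-smooth for the weighted loss, use $R^\lambda_S(w_p)-R^\lambda_S(w_s)\le \tfrac{c+\lambda}{2}\|\eta\|^2$, and plug in the Gamma tail bound $\|\eta\|\le \tfrac{2d\log(d/\delta_p)}{M\lambda\epsilon}$. That part is fine and matches the paper.

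The genuine gap is in step (iii). A ``standard Rademacher/covering'' bound over $\{w:\|w\|\le O(1/\sqrt{\lambda})\}$ gives $\sup_w |R^\lambda_S(w)-R^\lambda(w)| = O(1/\sqrt{\lambda N})$, not $O(1/(\lambda N))$; your argument as written cannot recover the $N^{-1}$ term $\tfrac{16(32+\log(1/\delta_s))}{\lambda N}$ in the statement. The paper does \emph{not} use uniform convergence here. It introduces $w^\ast=\arg\min_w R^\lambda(w)$ and invokes the strong-convexity fast-rate result of Sridharan et al.\ (2009), which says that for a $\lambda$-strongly convex objective with a $1$-Lipschitz loss,
\[
R^\lambda(w_p)-R^\lambda(w^\ast)\;\le\;2\bigl(R^\lambda_S(w_p)-R^\lambda_S(w_s)\bigr)+\frac{16(32+\log(1/\delta_s))}{\lambda N}
\]
with probability $1-\delta_s$. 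This relates \emph{expected} suboptimality to \emph{empirical} suboptimality directly, with the $1/N$ rate coming from the $\lambda$-strong convexity, not from a sup-norm deviation. The factor $2$ multiplying the empirical gap is also why the noise term in the theorem is $\tfrac{4d^2(c+\lambda)\log^2(d/\delta_p)}{\lambda^2 M^2\epsilon^2}$ rather than $2d^2\cdots$ as your smoothness computation alone would give.

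A smaller issue: your combining chain begins with $R(w_p)\le R^\lambda_S(w_p)$, which is not generally true; passing from $R$ to $R^\lambda_S$ at $w_p$ requires both dropping $\tfrac{\lambda}{2}\|w_p\|^2$ \emph{and} a deviation term, so even with a correct uniform bound you would pick it up twice, not once. The paper avoids this by decomposing $R(w_p)-R(w_0)\le R^\lambda(w_p)-R^\lambda(w^\ast)+\tfrac{\lambda}{2}\|w_0\|^2$ (using $R^\lambda(w^\ast)\le R^\lambda(w_0)$ and dropping $\tfrac{\lambda}{2}\|w_p\|^2$), and then applying Sridharan's inequality once to the single quantity $R^\lambda(w_p)-R^\lambda(w^\ast)$.
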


The generalization error bound above has the $O(M^{-2}\epsilon^{-2})$ term 
compared to the $O(\epsilon^{-2})$ term for majority-voted ERM (\ref{eq:performance 1}). 
This implies that by choosing a large $M$, Algorithm~\ref{alg:soft} can find 
a solution whose expected risk is close to the minimum of a non-private solution
for any fixed privacy level $\epsilon>0$. 

We remind the user that the results should be interpreted with a caution. 
The bounds in (\ref{eq:performance 1}) and (\ref{eq:performance 2}) 
indicate the goodness of private ERM solutions relative to the best non-private
solutions with deterministic and probability concepts which are not the same task.
Also, they do not indicate the goodness of the ensemble approach itself 
relative to a centrally-trained classifier using all private data without
privacy consideration.
We leave this comparison to empirical evaluation in the experiment section.


\subsection{Extensions}\label{sec:extension}


We discuss extensions of Algorithms~\ref{alg:vote} and ~\ref{alg:soft}
to provide additional privacy for auxiliary data. 
More precisely, those algorithms can be made $\epsilon$-differentially private
for all private data of a single party and a single sample in the auxiliary data,
by increasing the amount of perturbation as necessary.
We outline the proof as follows. 
In the previous sections, a global classifier was trained on 
auxiliary data whose labels were generated either by majority voting 
or soft labeling. A change in the local classifier affects only the labels $\{v_i\}$
of the auxiliary data but not the features $\{x_i\}$. 
Now assume in addition that the feature of one sample from the auxiliary data
can also change arbitrarily, {i.e.,} $x_j \ne x'_j$ for some $j$ and $x_i=x'_i$ for all
$i \in \{1,...,N\}\setminus\{j\}$. The sensitivity of the resultant risk minimizer
can be computed similarly to the proofs of Theorems~\ref{thm:privacy 1} and
\ref{thm:privacy 2} in Appendix~\ref{sec:appendix a}.
Briefly, the sensitivity is upper-bounded by the absolute sum of the difference
of gradients 
\begin{equation} \label{eq:nabla g}
\|\nabla g(w)\| \leq \frac{1}{N}\sum_{i=1}^N \|\nabla l(y_iw^Tx_i) - \nabla l(y_i'w^Tx_i')\|.
\end{equation}
For majority voting, one term in the sum (\ref{eq:nabla g}) is
\begin{equation}
\|v(x_j)x_j l'(v(x_j)w^Tx_j) - v'(x'_j) x'_j l'(v'(x'_j) w^T x'_j)\|
\end{equation}
which is at most 2 for any $x_j,x_j' \in \mathcal{X}$, 
and therefore the sensitivity is the same whether $x_j = x'_j$ or not. 
As a result, Algorithm~\ref{alg:vote} is already $\epsilon$-differentially private
for both labeled and auxiliary data without modification. 
Furthermore, the privacy guarantee remains the same if we allow $x_j\ne x'_j$ for
any number of samples. 
For soft labeling, one term in the sum (\ref{eq:nabla g}) is 
\begin{eqnarray}
&&\|\alpha_j x_j l'(w^Tx_j) -(1-\alpha_j) x_j l'(-w^Tx_j) \nonumber\\
&& -\alpha'_j x_j' l'(w^Tx'_j) +(1-\alpha'_j) x'_j l'(-w^Tx'_j)\|
\end{eqnarray}
which is also at most 2 for any $x_j,x_j' \in \mathcal{X}$ and $\frac{2}{M}$
when $x_j=x_j'$. 
When only a single auxiliary sample changes, i.e., $x_j\ne x_j'$ for one $j$, 
the overall sensitivity increases by a factor of $\frac{N+M-1}{N}$. 
By increasing the noise by this factor,
Algorithm~\ref{alg:soft} is $\epsilon$-differentially private
for both labeled and auxiliary data.
Note that this factor $\frac{N+M-1}{N}$ can be bounded close to 1 if we increase
the number of auxiliary samples $N$ relative to the number of parties $M$. 

\section{Related work} \label{sec:related work}

To preserve privacy in data publishing, several approaches such as 
$k$-anonymity \cite{Sweeney:2002:IJUFKS} and secure multiparty computation \cite{Yao:1982:FOCS}
have been proposed (see \cite{Fung:2010:CSUR} for a review.)
Recently, differential privacy \cite{Dwork:2004:CRYPTO,Dwork:2006:TC,Dwork:2006:ALP}
has addressed several weaknesses of $k$-anonymity \cite{Ganta:2008:SIGKDD}, 
and gained popularity as a quantifiable measure of privacy risk. 
The measure provides a bound on the privacy loss regardless of 
any additional information an adversary might have.
Differential privacy has been used for a privacy-preserving data analysis platform \cite{Mcsherry:2009:SIGMOD}, and
for sanitization of learned model parameters from a standard ERM 
\cite{Chaudhuri:2011:JMLR}.
This paper adopts output perturbation techniques from the latter
to sanitize non-standard ERM solutions from multiparty settings.

Private learning from multiparty data has been studied previously.
In particular, several differentially-private algorithms were proposed, including
parameter averaging through secure multiparty computation \cite{Pathak:2010}, 
and private exchange of gradient information to minimize empirical risks incrementally
\cite{Rajkumar:2012,Hamm:2015}.
Our paper is motivated by \cite{Pathak:2010} but uses a very different approach
to aggregate local classifiers. In particular, we use an ensemble approach and
average the classifier decisions \cite{Breiman:1996} instead of parameters,
which makes our approach applicable to arbitrary and mixed classifier types.
Advantages of ensemble approaches in general have been analyzed previously,
in terms of bias-variance decomposition \cite{Breiman:1996b}, 
and in terms of the margin of training samples \cite{Schapire:1998}. 

Furthermore, we are using unlabeled data to augment labeled data during training,
 which can be considered a semi-supervised learning method \cite{Chapelle:2006}. 
There are several related papers in this direction.
Augmenting private data with non-private {\it labeled} data to lower the sensitivity
of the output is straightforward, and was demonstrated in medical applications \cite{Ji:2014}.
Using non-private {\it unlabeled} data, which is more general than using labeled data, 
was demonstrated specifically to assist learning of random forests \cite{Jagannathan:2013}.
Our use of auxiliary data is not specific to classifier types. 
Furthermore, we present an extension to preserve the privacy of auxiliary data as well.

\section{Experiments} \label{sec:experiments}

We use three real-world datasets to compare the performance of
the following algorithms:
\begin{itemize}\setlength{\itemsep}{-2pt}
\item {\it batch}: classifier trained using all data ignoring privacy
\item {\it soft}: private ensemble using soft-labels (Algorithm~\ref{alg:soft})
\item {\it avg}: parameter averaging \cite{Pathak:2010}
\item {\it vote}: private ensemble using majority voting (Algorithm~\ref{alg:vote})
\item {\it indiv}: individually trained classifier using local data
\end{itemize}
We can expect {\it batch} to perform better than any private algorithm 
since it uses all private data for training ignoring privacy.
In contrast, {\it indiv} uses only the local data for training and will 
perform significantly worse than {\it batch}, but it achieves a perfect privacy 
as long as the trained classifiers are kept local to the parties. 
We are interested in the range of $\epsilon$ where private algorithms ({\it soft},
{\it avg}, and {\it vote}) perform better than the baseline {\it indiv}. 

To compare all algorithms fairly, we use only a single type of
classier -- binary or multiclass logistic regression. For Algorithms~\ref{alg:vote}
and \ref{alg:soft}, both local and global classifiers are of this type as well.
The only hyperparameter of the model is the regularization coefficient
$\lambda$ which we fixed to $10^{-4}$ after performing some preliminary experiments.
\if0
The generalization error bound (\ref{eq:performance 2}) show that $\lambda$ 
cannot be too large or small, as the gap increases with $O(\lambda)$ for
$\lambda \gg 1$ and also increases with $O(\lambda^{-2})$ for $\lambda \ll 1$.
In another point of view, the additive noise $\eta$ for {\it soft},{\it avg},
and {\it vote} all scales with $\lambda^{-2}$ in common, and the choice of this
value critically affects the performance. 
\fi
About $10\%$ of the original training data are used as auxiliary unlabeled data,
and the rest $90\%$ are randomly distributed to $M$ parties as private data. 
We report the mean and s.d.~over 10 trials for non-private
algorithms and 100-trials for private algorithms. 

\begin{figure}[htb]
\centering
\includegraphics[width=0.8\linewidth]{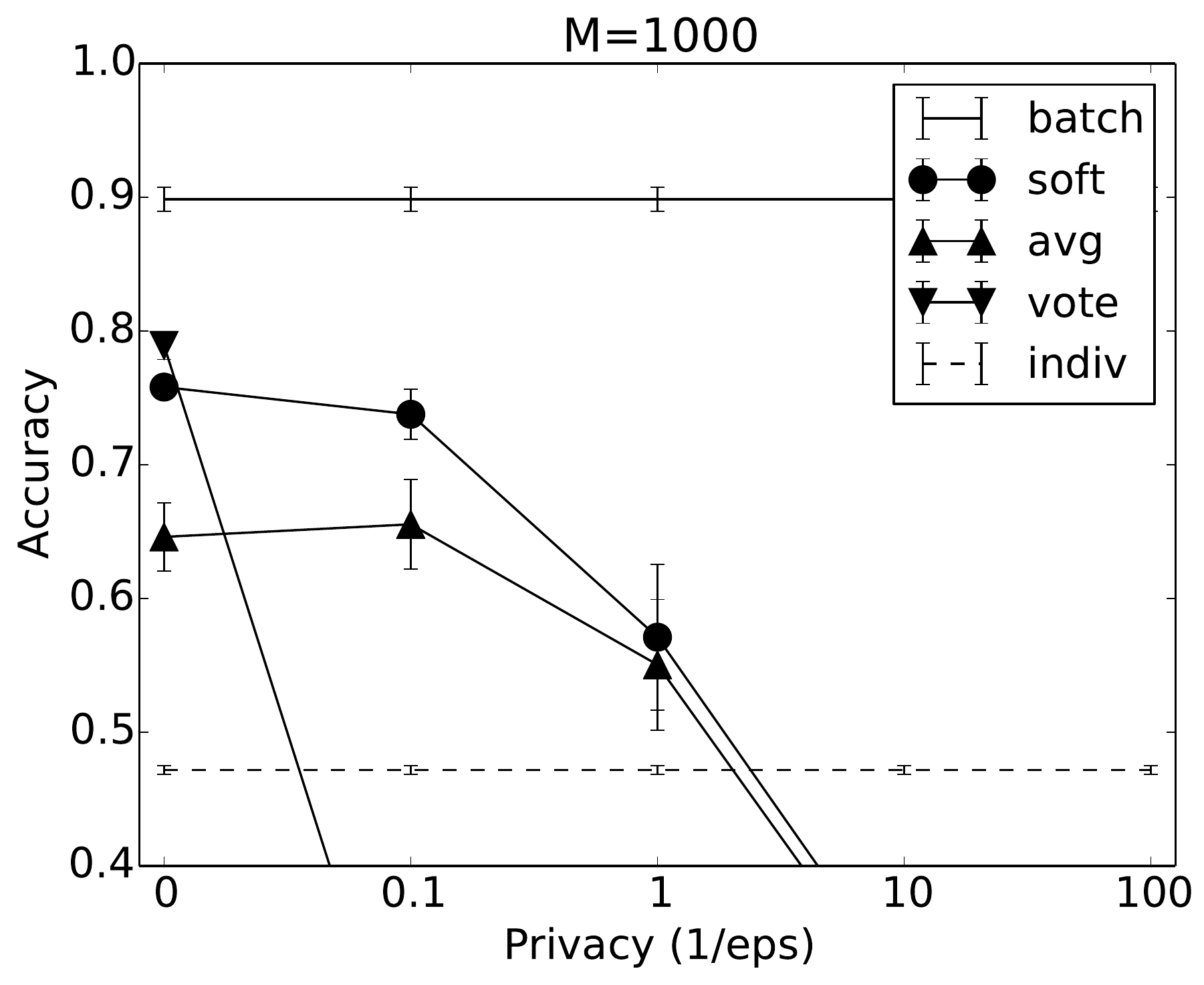}
\caption{
Test accuracy of private and non-private algorithms for activity recognition
with $K=6$ activity types. 
}
\label{fig:activity recognition}
\end{figure}

\begin{figure*}[htb]
\centering
\includegraphics[width=0.33\linewidth]{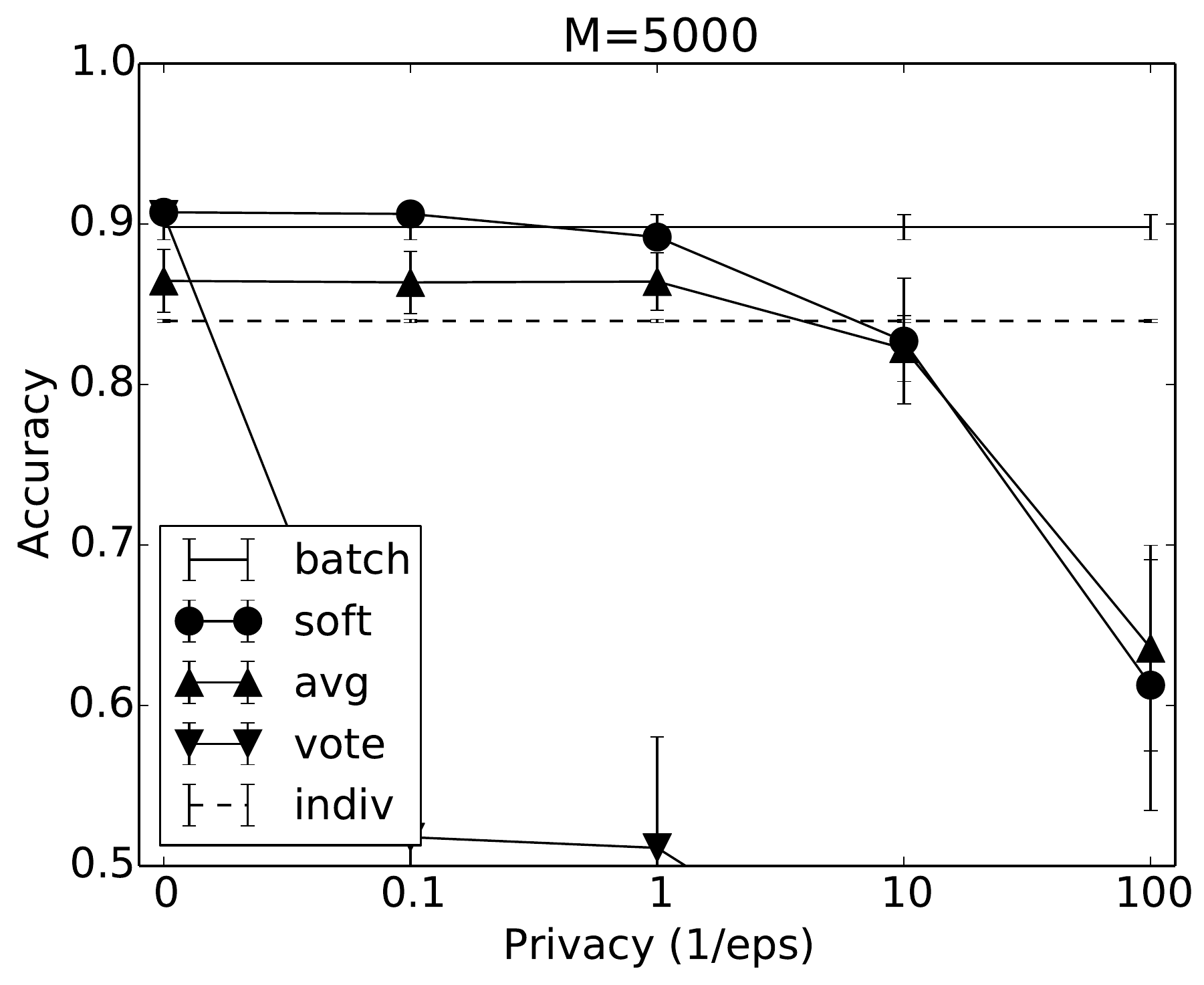}
\includegraphics[width=0.33\linewidth]{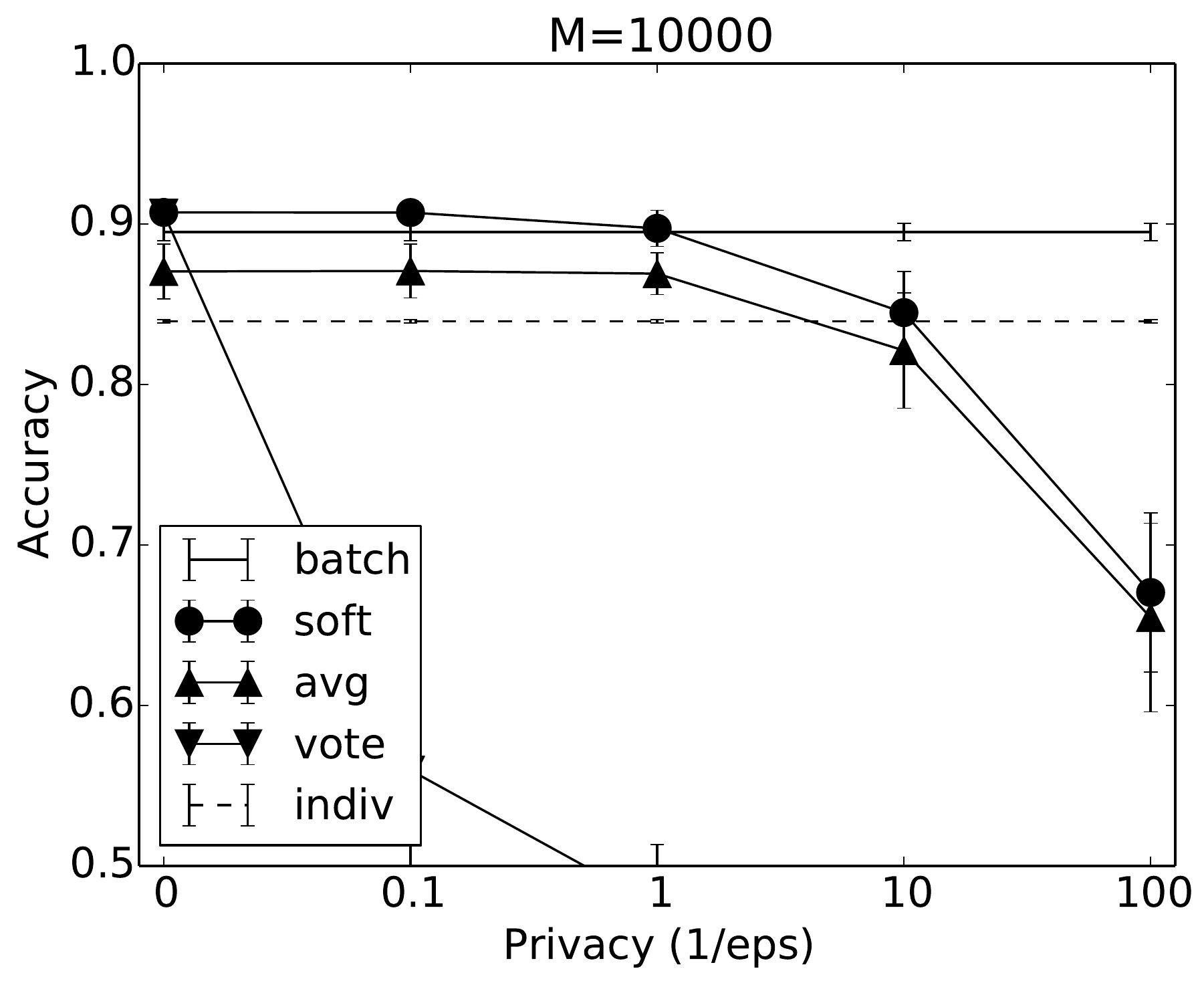}
\includegraphics[width=0.33\linewidth]{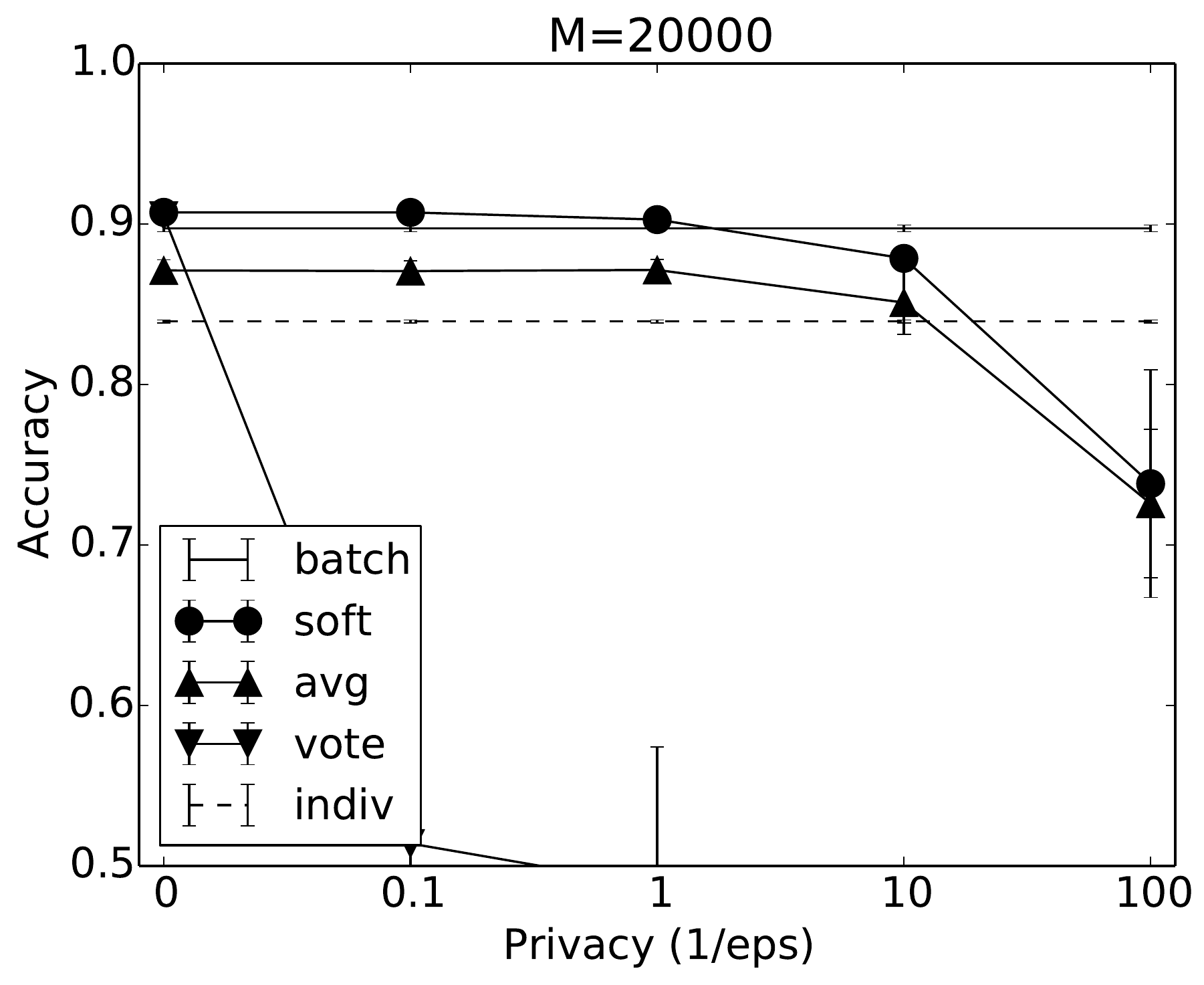}
\caption{
Test accuracy of private and non-private algorithms for network intrusion detection.
}
\label{fig:network intrusion detection}
\end{figure*}

\begin{figure*}[htb]
\centering
\includegraphics[width=0.33\linewidth]{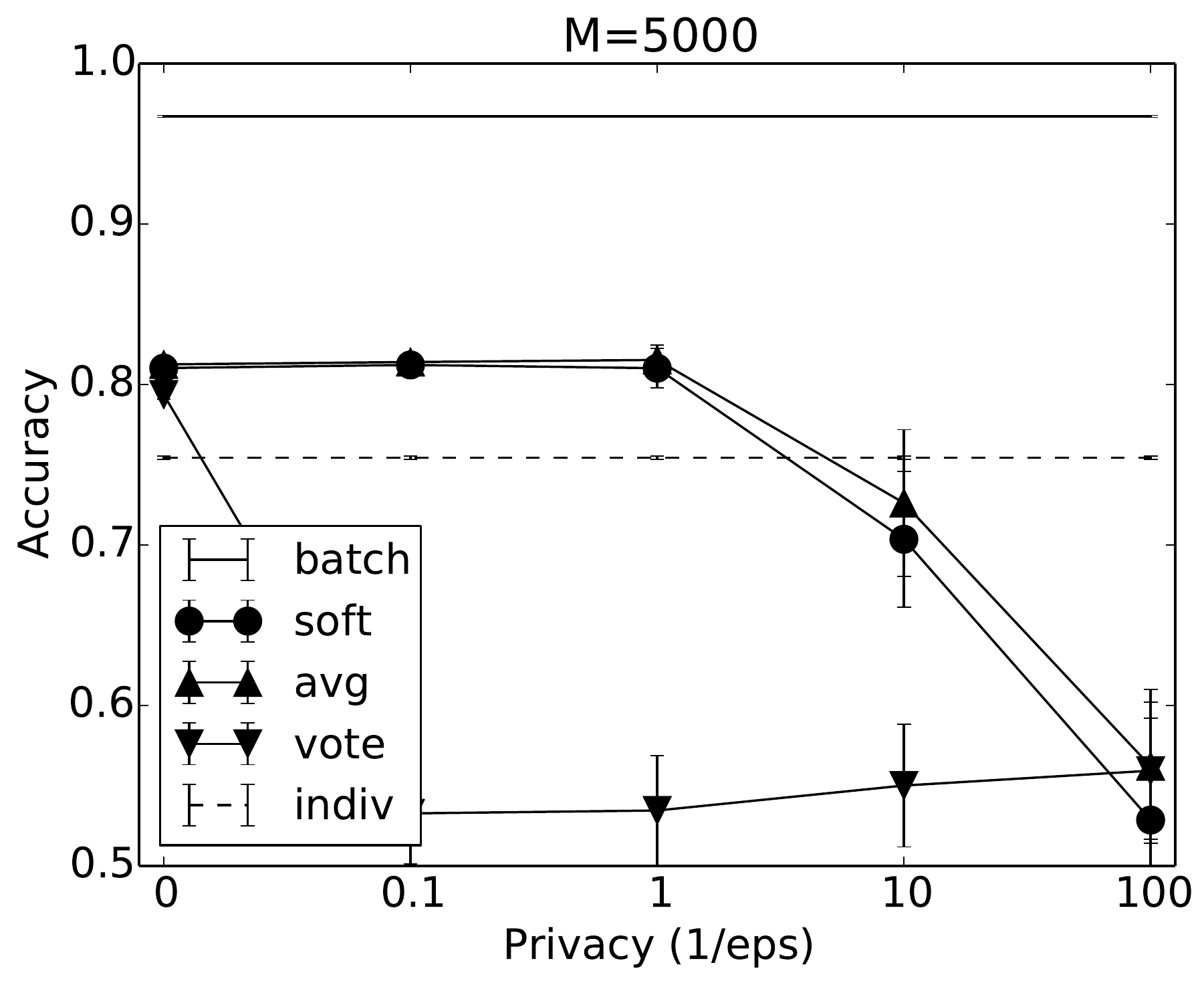}
\includegraphics[width=0.33\linewidth]{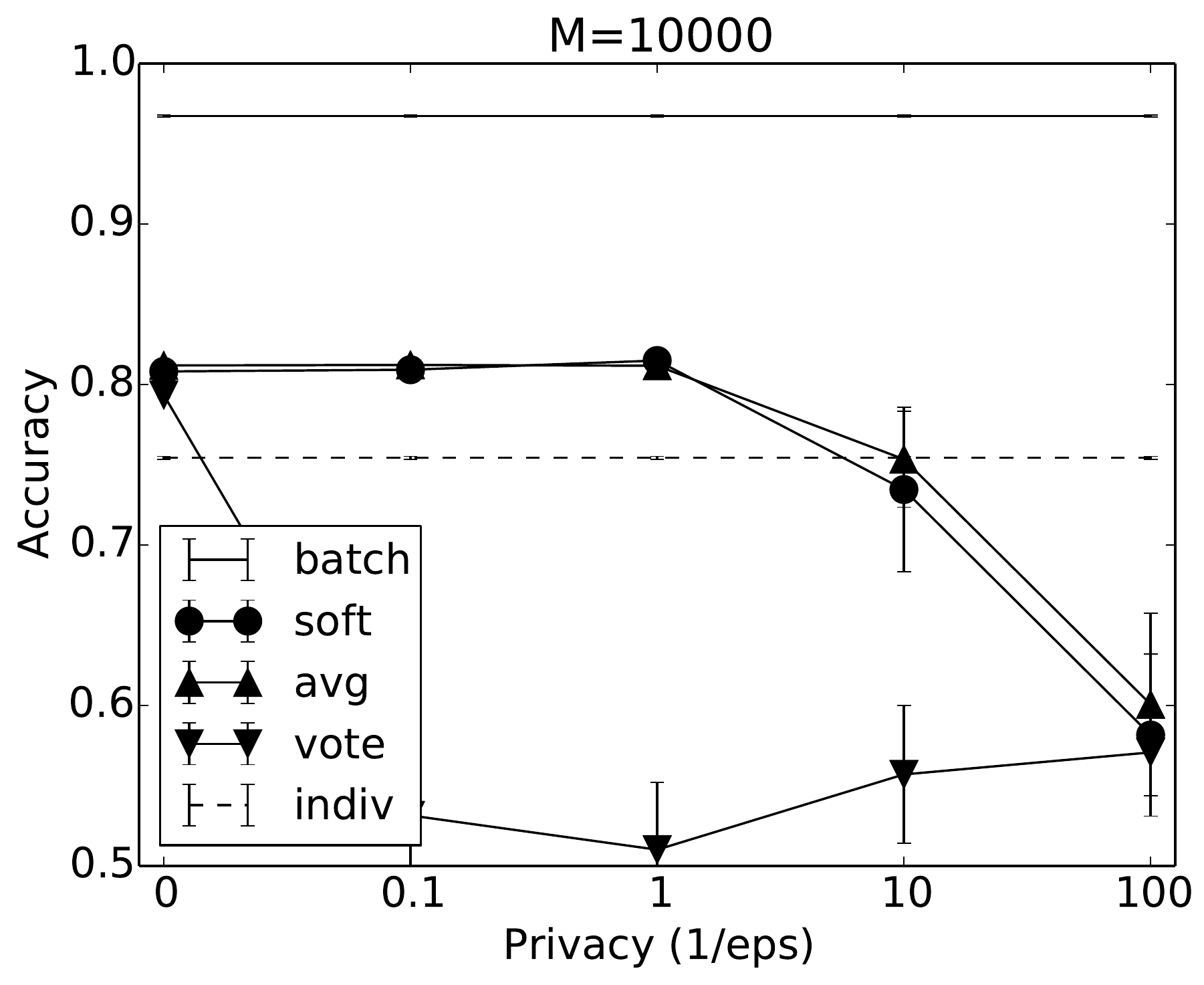}
\includegraphics[width=0.33\linewidth]{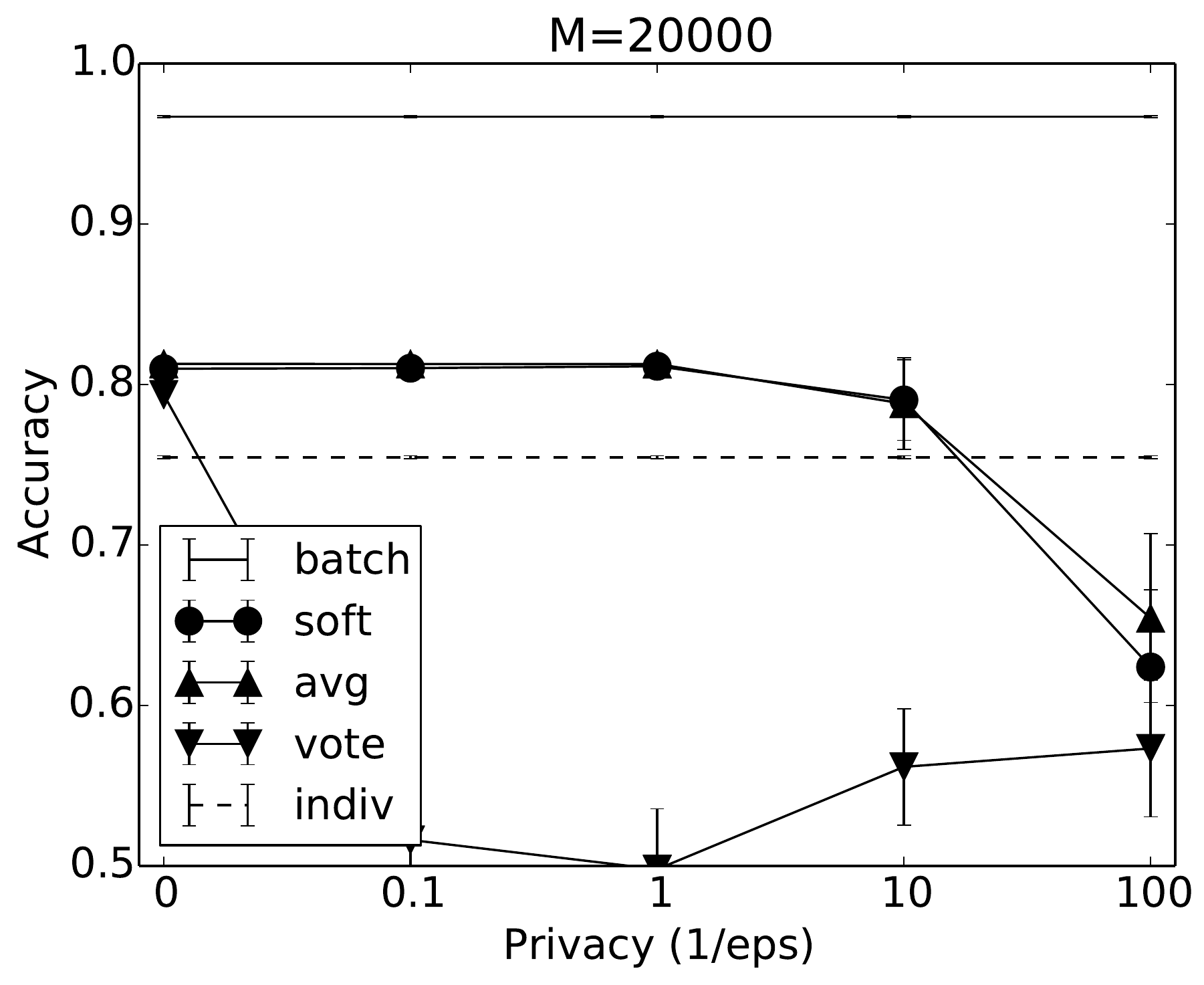}
\caption{
Test accuracy of private and non-private algorithms for malicious URL detection.
}
\label{fig:malicious URL detection}
\end{figure*}

\subsection{Activity recognition using accelerometer}

Consider a scenario where wearable device users want to train a motion-based
activity classifier without revealing her data to others.
To test the algorithms, we use the UCI Human Activity Recognition Dataset \cite{Anguita:2012:IWAAL},
which is a collection of motion sensor data on a smart device by multiple subjects
performing $6$ activities ({\it walking}, {\it walking upstairs}, 
{\it walking downstairs}, {\it sitting}, {\it standing}, {\it laying}).
Various time and frequency domain variables are extracted from the signal,
and we apply PCA to get $d=50$ dimensional features.
The training and testing samples are $7K$ and $3K$, respectively.

We simulate a case with $M=1K$ users (i.e.,~parties). Each user can use only
$6$ samples to train a local classifier. 
The remaining $1K$ samples are used as auxiliary unlabeled data. 
Figure~\ref{fig:activity recognition} shows the test accuracy of using different
algorithms with varying privacy levels. 
For non-private algorithms, 
the top solid line ({\it batch}) shows the accuracy of a batch-trained classifier
at around $0.90$, and the bottom dashed line ({\it indiv}) shows the averaged accuracy
of local classifiers at around $0.47$.
At $1/\epsilon = 0$, the private algorithms achieve test accuracy of
0.79 ({\it vote}), 0.76 ({\it soft}) and 0.67 ({\it avg}), and as the the 
privacy level $1/\epsilon$ increases, the performance drops for all private algorithms. 
As expected from the bound (\ref{eq:performance 1}), 
{\it vote} becomes useless even at  $1/\epsilon=0.1$, while 
{\it soft} and {\it avg} are better than {\it indiv} until $1/\epsilon=1$. 
We fixed $M$ to 1000 in this experiment due to the limited number of samples, 
the tendency in the graph is similar to other experiments with larger $M$'s.

\subsection{Network intrusion detection}

Consider a scenario where multiple gateways or routers collect suspicious network
activities independently,
and aim to collaboratively build an accurate network intrusion detector without
revealing local traffic data.
For this task we use the KDD-99 dataset, which consists of examples of `bad'
connections, called intrusions or attacks, and `good' normal connections.
Features of this dataset consists of continuous values and categorical attributes. 
To apply logistic regression, we change categorical attributes to one-hot vectors 
to get $d=123$ dimensional features. 
The training and testing samples are $493K$ and $311K$, respectively.

We simulate cases with $M=5K/10K/20K$ parties. Each party can only use $22$ samples
to train a local classifier.
The remaining $43K$ samples are used as auxiliary unlabeled data. 
 Figure~\ref{fig:network intrusion detection} shows the test accuracy of using different
algorithms with varying privacy levels. 
For each of $M=5K/10K/20K$ the tendency of algorithms is similar to
Figure~\ref{fig:activity recognition} -- for a small $1/\epsilon$, private algorithms
perform roughly in between {\it batch} and {\it indiv}, and as 
$1/\epsilon$ increases
private algorithms start to perform worse than {\it indiv}. 
When $M$ is large (e.g., $M=20K$), private algorithms {\it soft} and {\it avg}
hold their accuracy better than when $M$ is small (e.g., $M=5K$.) 
In particular, {\it soft} performs nearly as well as the non-private {\it batch} 
until around $1/\epsilon=10$ compared to ${\it avg}$.

\subsection{Malicious URL prediction}

In addition to network intrusion detection, multiple parties
such as computer users can collaborate on detecting malicious URLs without
revealing the visited URLs.
The Malicious URL Dataset \cite{Ma:2009} is a collection of examples
of malicious URLs from a large Web mail provider. 
The task is to predict whether a URL is malicious or not by 
various lexical and host-based features of the URL.
We apply PCA to get $d=50$ dimensional feature vectors.
We choose days 0 to 9 for training, and days 10 to 19 for testing, which
amount to $200K$ samples for training and $200K$ samples for testing.

We simulate cases with $M=5K/10K/20K$ parties. Each party can use only $9$ samples
to train a local classifier.
The remaining $16K$ samples are used as auxiliary unlabeled data. 
Figure~\ref{fig:malicious URL detection} shows the test accuracy of using different
algorithms with varying privacy levels. 
The gap between {\it batch} and other algorithms is larger compared to the previous 
experiment (Figure~\ref{fig:network intrusion detection}), most likely due to 
the smaller number (=9) of samples per party. 
However, the overall tendency is very similar to previous experiments.

\section{Conclusion} \label{sec:conclusion}

In this paper, we propose a method of building global differentially private 
classifiers from local classifiers using two new ideas: 
1) leveraging unlabeled auxiliary data to transfer the knowledge of the 
ensemble, and 2) solving a weighted ERM using class probability estimates
from the ensemble.
In general, privacy comes with a loss of classification performance.
We present a solution to minimize the performance gap between private and
non-private ensembles demonstrated with real world tasks. 


{\small
\bibliographystyle{icml2016}
\bibliography{icml16_1_jh,nips14_1_jh,icdcs15_jh}
}

\appendix

\section{Proofs} \label{sec:appendix a}

\subsection{Proof of Theorem~\ref{thm:privacy 1}}
{\it
Theorem~\ref{thm:privacy 1}: 
The perturbed output $w_p = w_s + \eta$ from Algorithm~\ref{alg:vote} with
$p(\eta) \propto e^{-\frac{\lambda \epsilon}{2}\|\eta\|}$ 
is $\epsilon$-differentially private.
}
\begin{proof}
We will compute the sensitivity of the minimizer $w_s$ of
the regularized empirical risk with majority-voted labels (\ref{eq:regularized empirical risk}). 
Suppose $\mathcal{D} =(S^{(1)},...,S^{(M)})$ is the ordered set of 
private training data (\ref{eq:private data}) for $M$ parties, 
and $\mathcal{D}' =((S')^{(1)},...,S^{(M)})$ is a neighboring set 
which differs from $\mathcal{D}$ only at party 1's data, without loss of generality.
The local classifiers after training with $\mathcal{D}$ and $\mathcal{D}'$ are
$H = (h_1,...,h_M)$ and $H' = (h_1', ..., h_M)$, respectively, which are 
again different only for classifier 1.
The majority votes $v(x)$ and $v'(x)$ from $\mathcal{D}$ and $\mathcal{D}'$
generates two auxiliary training sets
$S = \{(x_i,v(x_i))\}$ and $S' = \{(x_i,v'(x_i)\}$ which have the same features
but possibly different labels. 

Let $R^\lambda_S(w)$ and $R^\lambda_{S'}(w)$ be the regularized empirical risks for
training sets $S$ and $S'$, and let $w_s$ and $w_{s'}$ be the minimizers of the
respective risks. 
From Corollaries 7 and 8 \cite{Chaudhuri:2011:JMLR}, the $L_2$ difference of
$w_s$ and $w_{s'}$ is bounded by
\begin{equation}
\|w_s-w_{s'}\| \leq \frac{1}{\lambda}\max_w \|\nabla g(w)\|,
\end{equation}
where $g(w)$ is the risk difference $R^\lambda_S(w) - R^\lambda_{S'}(w)$,
which, in our case, satisfies
\begin{eqnarray}\label{eq:risk diff 1}
\|\nabla g(w)\| &\leq& \frac{1}{N}
\sum_{i=1}^N \|v(x_i)x_i l'(v(x_i)w^Tx_i) \nonumber\\
&&\;\; - v'(x_i) x_i l'(v'(x_i) w^T x_i)\|. \nonumber\\
&\leq& \frac{1}{N}\sum_{i=1}^N \|x_i\|\times\nonumber\\
&& |l'(w^Tx_i)+l'(-w^Tx_i)|.
\end{eqnarray}
Recall that $\|x\|\leq 1$ and $|l'(\cdot)|\leq 1$ by assumption.
In the worst case, $v(x_i) \neq v'(x_i)$ for all $i=1,...,N$, 
and therefore the RHS of (\ref{eq:risk diff 1}) is bounded by 2.
Consequently, the $L_2$ sensitivity of the minimizer $w_s$ is 
\begin{equation}
\max_{S,S'} \|w_s-w_{s'}\| \leq \frac{2}{\lambda}.
\end{equation}
$\epsilon$-differential privacy follows from the sensitivity
result (\ref{eq:sensitivity method}). 
\end{proof}

\subsection{Proof of Theorem~\ref{thm:privacy 2}}
{\it
Theorem~\ref{thm:privacy 2}: 
The perturbed output $w_p = w_s + \eta$ from Algorithm~\ref{alg:soft} with
$p(\eta) \propto e^{-\frac{M \lambda \epsilon}{2}\|\eta\|}$ 
is $\epsilon$-differentially private.
}

\begin{proof}
The proof parallels the proof of Theorem~\ref{thm:privacy 1}.
We again assume $\mathcal{D} =(S^{(1)},...,S^{(M)})$ is the ordered set of 
private training data (\ref{eq:private data}) for $M$ parties, 
and $\mathcal{D}' =((S')^{(1)},...,S^{(M)})$ is a neighboring set 
which differs from $\mathcal{D}$ only at party 1's data, without loss of generality.
Let $S = \{(x_i,\alpha_i)\}$ and $S' = \{(x_i,\alpha'_i)\}$ be the two
resulting datasets which have the same the features but possibly different
$\alpha$'s.
We first compute the sensitivity of the minimizer of
the weighted regularized empirical risk (\ref{eq:weighted regularized empirical risk}). 
Let $R^\lambda_S(w)$ and $R^\lambda_{S'}(w)$ be the regularized empirical risks for
training sets $S$ and $S'$, and let $w_s$ and $w_{s'}$ be the minimizers of 
the respective risks. 
Also let $g(w)$ be the difference $R^\lambda_{S}(w) - R^\lambda_{S'}(w)$ of two risks
\begin{eqnarray}
g(w)&=& \frac{1}{N} \sum_{i=1}^N[\alpha_i l(w^Tx_i)+(1-\alpha_i)l(-w^Tx_i) \nonumber\\
&& - \alpha'_i l(w^Tx_i)-(1-\alpha'_i)l(-w^Tx_i)].
\end{eqnarray}
\if0
\begin{eqnarray}
&&\nabla g(w)\\
&=& \frac{1}{N} \sum_{i=1}^N[\alpha_i x_i l'(w^Tx_i)-(1-\alpha_i)x_il'(-w^Tx_i)\\
&& - \alpha'_i x_i l'(w^Tx_i)+(1-\alpha'_i)x_il'(-w^Tx_i)].
\end{eqnarray}
From Corollaries 7 and 8 \cite{Chaudhuri:2011:JMLR}, the $L_2$ difference of
$w_1$ and $w_2$ are bounded 
\begin{equation}
\|w_1-w_2\| \leq \frac{1}{\lambda}\max_w \|\nabla g(w)\|,
\end{equation}
which, in our case, is
\fi
The gradient of $g(w)$ is bounded by 
\begin{eqnarray}
\|\nabla g(w)\| &\leq& 
\frac{1}{N} \sum_{i=1}^N [ |\alpha_i-\alpha'_i| \|x_i\| |l'(w^Tx_i)| \nonumber\\
&& + |\alpha_i-\alpha'_i|\|x_i\| |l'(-w^Tx_i)|]\label{eq:risk diff 2}\\
&\leq& \frac{1}{N} \sum_{i=1}^N 2|\alpha_i-\alpha'_i|.
\end{eqnarray}
In the worst case,  $\alpha_i \ne \alpha_i'$ for all $i=1,...,N$. 
Since $\alpha_i$ is the fraction of positive votes, $|\alpha_i-\alpha_i'| \leq 1/M$
holds for all $i=1,...,N$.
Therefore the $L_2$ sensitivity of the minimizer $w_s$
is at most $\frac{2}{\lambda M}$ and  
the $\epsilon$-differential privacy follows.
\end{proof}

\subsection{Lemma~\ref{lemma:gamma}}

We use the following lemma. 
\begin{lemma}[Lemma 17 of \cite{Chaudhuri:2011:JMLR}]\label{lemma:gamma}
If $X \sim \Gamma(k,\theta)$, where $k$ is an integer, then
with probability of at least $1-\delta$,
\[
X \leq k\theta \log (k/\delta).
\]
\end{lemma}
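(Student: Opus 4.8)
The plan is to exploit the assumption that the shape parameter $k$ is an integer, which lets me represent $X$ as a sum of independent exponentials and then control that sum by a union bound rather than by manipulating the Gamma density directly. Concretely, when $k$ is a positive integer, a $\Gamma(k,\theta)$ variable is distributed as $X = \sum_{i=1}^k Y_i$, where the $Y_i$ are i.i.d. exponential random variables of scale $\theta$, i.e. with tail $P(Y_i > t) = e^{-t/\theta}$ for $t \ge 0$. I would begin by recording this Erlang representation, since it is the only place the integrality of $k$ is used and it is precisely what makes the clean logarithmic bound attainable.

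Next I would reduce the tail of the sum to the tail of a single summand. The key observation is the deterministic implication that if every summand satisfies $Y_i \le t$, then $X = \sum_i Y_i \le kt$; contrapositively, the event $\{X > kt\}$ is contained in the event $\{\exists i : Y_i > t\}$. Taking $t = \theta \log(k/\delta)$ gives $kt = k\theta \log(k/\delta)$, exactly the threshold in the statement. Applying the union bound over the $k$ summands together with the exponential tail then yields
\[
P\bigl(X > k\theta\log(k/\delta)\bigr) \le \sum_{i=1}^k P\bigl(Y_i > \theta\log(k/\delta)\bigr) = k\,e^{-\log(k/\delta)} = k\cdot\frac{\delta}{k} = \delta,
\]
so that $X \le k\theta\log(k/\delta)$ holds with probability at least $1-\delta$, as claimed.

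I do not expect a serious obstacle here: the argument is elementary once the exponential decomposition is in hand. The only step that requires care is the event-containment reduction — verifying that the sum can exceed $kt$ only when some individual term exceeds $t$ — together with remembering that the whole route relies on $k$ being an integer. Were $k$ non-integral, the clean decomposition fails, and one would instead have to bound the tail through the moment generating function $(1-\theta s)^{-k}$ and optimize the Chernoff exponent over $s \in (0,1/\theta)$; that route reproduces a bound of the same order but with a messier constant, so the union-bound argument above is preferable precisely because it matches the stated form exactly.
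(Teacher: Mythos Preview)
Your argument is correct. The Erlang decomposition $X=\sum_{i=1}^k Y_i$ with i.i.d.\ $Y_i\sim\mathrm{Exp}(\theta)$, the containment $\{X>kt\}\subset\bigcup_i\{Y_i>t\}$, and the union bound with $t=\theta\log(k/\delta)$ are exactly right and yield the stated inequality.

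As for comparison with the paper: this paper does not actually prove Lemma~\ref{lemma:gamma}. It is quoted verbatim as Lemma~17 of \cite{Chaudhuri:2011:JMLR} and invoked without argument. Your proof is in fact the standard one given in that reference, so there is nothing further to compare.
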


\subsection{Lemma~\ref{lemma:noise}}

\begin{lemma}\label{lemma:noise}
If $w_s$ is the minimizer of (\ref{eq:weighted regularized empirical risk}) and 
$w_p$ is the $\epsilon$-differentially private version from Algorithm~\ref{alg:soft},
then with probability of at least $1-\delta_p$ over the privacy mechanism,
\begin{equation}
R^{\lambda}_S(w_p) \leq R^{\lambda}_S(w_s) + \frac{2d^2(c+\lambda)\log^2(d/\delta)}{\lambda^2 M^2 \epsilon^2}
\end{equation}
\end{lemma}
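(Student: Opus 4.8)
The plan is to adapt the output-perturbation argument of \cite{Chaudhuri:2011:JMLR} to the weighted regularized empirical risk (\ref{eq:weighted regularized empirical risk}) together with the $1/M$-scaled noise prescribed by Algorithm~\ref{alg:soft}. Write $w_p = w_s + \eta$ with $\eta$ drawn from $p(\eta) \propto e^{-\frac{M\lambda\epsilon}{2}\|\eta\|}$. Since $w_s$ is the minimizer of the convex objective $R^\lambda_S$, we have $\nabla R^\lambda_S(w_s) = 0$, so the whole argument reduces to bounding how fast $R^\lambda_S$ can grow within distance $\|\eta\|$ of its minimizer, and then giving a high-probability bound on $\|\eta\|$.

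First I would show that $\nabla R^\lambda_S$ is $(c+\lambda)$-Lipschitz. The gradient of the per-sample weighted loss $l^\alpha(w^Tx_i) = \alpha_i l(w^Tx_i) + (1-\alpha_i) l(-w^Tx_i)$ is $\alpha_i x_i l'(w^Tx_i) - (1-\alpha_i) x_i l'(-w^Tx_i)$; using $|l'(s)-l'(t)| \le c|s-t|$, $\|x_i\| \le 1$, and $0 \le \alpha_i \le 1$, this is $c$-Lipschitz in $w$, and averaging over $i$ preserves the constant, while $\frac{\lambda}{2}\|w\|^2$ contributes an additional $\lambda$. The standard descent inequality for a differentiable function with $L$-Lipschitz gradient then gives
\[
R^\lambda_S(w_p) \le R^\lambda_S(w_s) + \langle \nabla R^\lambda_S(w_s), \eta \rangle + \frac{c+\lambda}{2}\|\eta\|^2 = R^\lambda_S(w_s) + \frac{c+\lambda}{2}\|\eta\|^2 ,
\]
the cross term vanishing because $w_s$ is a minimizer. (Working through the Lipschitz gradient rather than a Hessian matters here, since $l$ is only assumed $C^1$ with $c$-Lipschitz derivative, not $C^2$.)

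Next I would identify the law of $\|\eta\|$. Integrating out the angular part of the density on $\mathbb{R}^d$, the radius $\|\eta\|$ has density proportional to $r^{d-1} e^{-\frac{M\lambda\epsilon}{2} r}$, i.e.\ $\|\eta\| \sim \Gamma\!\left(d, \tfrac{2}{M\lambda\epsilon}\right)$ with integer shape parameter $d$. Lemma~\ref{lemma:gamma} then yields, with probability at least $1-\delta_p$ over the privacy mechanism,
\[
\|\eta\| \le \frac{2d}{M\lambda\epsilon}\log(d/\delta_p), \qquad \|\eta\|^2 \le \frac{4 d^2 \log^2(d/\delta_p)}{M^2 \lambda^2 \epsilon^2}.
\]
Substituting into the smoothness bound gives $R^\lambda_S(w_p) \le R^\lambda_S(w_s) + \frac{2 d^2 (c+\lambda)\log^2(d/\delta_p)}{\lambda^2 M^2 \epsilon^2}$, which is exactly the stated inequality (reading $\delta$ as $\delta_p$).

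I do not anticipate a real obstacle; the two delicate points are the ones noted above — handling the smoothness step via the $(c+\lambda)$-Lipschitz gradient and the descent inequality, and correctly reading off the $\Gamma\!\left(d, \tfrac{2}{M\lambda\epsilon}\right)$ law of $\|\eta\|$ from the noise density of Theorem~\ref{thm:privacy 2}. All of the $1/M$ gain over the majority-voting analysis enters solely through the larger rate $\frac{M\lambda\epsilon}{2}$ in that density.
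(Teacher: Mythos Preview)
Your proposal is correct and follows essentially the same route as the paper: establish that $R^\lambda_S$ has $(c+\lambda)$-Lipschitz gradient, apply the resulting quadratic upper bound at the minimizer $w_s$ where the linear term vanishes, then invoke Lemma~\ref{lemma:gamma} with $k=d$ and $\theta = \tfrac{2}{M\lambda\epsilon}$ to control $\|\eta\|$. The paper's write-up is slightly less explicit about identifying the Gamma law of $\|\eta\|$, but the argument is otherwise identical.
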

\if0
{\it
Lemma~\ref{lemma:noise}: 
If $w_s$ is the regularized empirical risk minimizer and $w_p$ is the 
$\epsilon$-differentially private version from Theorem~\ref{thm:privacy 2}, then
with probability of at least $1-\delta_p$ over the privacy mechanism,
\begin{equation}
R^{\lambda}_S(w_p) \leq R^{\lambda}_S(w_s^\ast) + \frac{2d^2(c+\lambda)\log^2(d/\delta)}{\lambda^2 M^2 \epsilon^2}
\end{equation}
}
\fi

\begin{proof}
A differentiable function $f:\mathbb{R}^d \to \mathbb{R}$ is called
$\beta$-smooth, if $\exists \beta >0$ such that $\|\nabla f(v)-\nabla f(u)\|
\leq \beta \|v-u\|$ for all $u,v$.
From the Mean Value Theorem, such a function satisfies
\[
f(v) \leq f(u) + \nabla^T f(u)(v-u) + \frac{\beta}{2}\|v-u\|^2,\;\forall u,v.
\]

Since $|l'(\cdot)|$ is $c$-Lipschitz, $R^\lambda_S(w)$ is $(c+\lambda)$-smooth:
\begin{eqnarray}
&& \|\nabla R^\lambda_S(v)- \nabla R^\lambda_S(u)\| \nonumber \\
&&\leq  \frac{1}{N} \sum_i \left \| \alpha_i x_i l'(v^Tx_i) - (1-\alpha_i)x_i l'(-v^Tx_i) \right. \nonumber\\
&& \left. - \alpha_i x_i l'(u^Tx_i) + (1-\alpha_i)x_i l'(-u^T x_i)\right\| \nonumber\\
&& +\lambda\|v-u\| \nonumber \\
&& \leq \frac{1}{N}\sum_i \left[\alpha_i c \|(v-u)^Tx_i\| + \right.\nonumber \\
&&  \left.(1-\alpha_i)c \|(u-v)^Tx_i\|\right] + \lambda\|v-u\| \nonumber\\
&& \leq (c+\lambda)\|u-v\|. 
\end{eqnarray}
By setting $v = w_p$ and $u=w_s$ and using the $(c+\lambda)$-smoothness of
$R^\lambda_S(w)$, we have 
\begin{eqnarray}
R^{\lambda}_S(w_p) &\leq& R^{\lambda}_S(w_s) + \nabla^T R^\lambda_S(w_s) (w_p-w_s)\nonumber\\
&& \;\;\;+ \frac{(c+\lambda)}{2} \|w_p-w_s^\ast\|^2 \nonumber\\
&=& R^{\lambda}_S(w_s) + \frac{(c+\lambda)}{2} \|w_p-w_s\|^2.
\end{eqnarray}

\if0
As $\|\cdot\|^2$ and $l$ are differentiable, we use the Mean Value Theorem
to show that for some $0 \leq t \leq 1$, 
\begin{eqnarray}
&& R^{\lambda}_S(w_p) - R^{\lambda}_S(w_s^\ast)\\
&=&(w_p-w_s^\ast)^T\nabla R^{\lambda}_S(t w_s^\ast + (1-t)w_p)\\
&\leq& \|w_p-w_s^\ast\|\cdot \|\nabla R^{\lambda}_S(tw_s^\ast + (1-t)w_p)\|.
\end{eqnarray}

Recall that
\[
\nabla R^{\lambda}_S(w) = \frac{1}{n}\sum_i y_i l'(y_iw^Tx_i) x_i + \lambda w.
\]
Moreover, recall that $\nabla R^{\lambda}_S(w_s^\ast)=0$ from the optimality of $w_s^\ast$.
Therefore,
\begin{eqnarray}
&&\nabla R^{\lambda}_S(tw_s^\ast + (1-t)w_p) \\
&=& \nabla R^{\lambda}_S(w_s^\ast) - \lambda(w_s^\ast - (tw_s^\ast+(1-t)w_p))\\
&&-\frac{1}{n}\sum_i y_i [l'(y_i(w_s^\ast)^Tx_i) - l'(y_i(tw_s^\ast+(1-t)w_p)^Tx_i)]x_i\\
&=& -\lambda(1-t)(w_s^\ast - w_p)\\
&&-\frac{1}{n}\sum_i y_i [l'(y_i(w_s^\ast)^Tx_i) - l'(y_i(tw_s^\ast+(1-t)w_p)^Tx_i)]x_i.
\end{eqnarray}

Since $l$ is $c$-Lipschitz,
\begin{eqnarray}
&&\|y_i [l'(y_i(w_s^\ast)^Tx_i) - l'(y_i(tw_s^\ast+(1-t)w_p)^Tx_i)]x_i\|\\
&\leq& |y_i| \|x_i\| |l'(w_s^\ast)^Tx_i) - l'(y_i(tw_s^\ast+(1-t)w_p)^Tx_i)|\\
&\leq& |y_i| \|x_i\| c |y_i(1-t)(w_s^\ast - w_p)^T x_i)|\\
&\leq& c(1-t)|y_i|^2 \|x_i\|^2 \|w_s^\ast - w_p\|\\
&\leq& c(1-t)\|w_s^\ast - w_p\|.
\end{eqnarray}
Combining the two, we have
\[
\|\nabla R^{\lambda}_S(tw_s^\ast + (1-t)w_p)\|
\leq (\lambda+c)(1-t)\|w_s^\ast - w_p\|,
\]
and therefore
\[
R^{\lambda}_S(w_p) - R^{\lambda}_S(w_s^\ast) \leq (\lambda+c) \|w_p-w_s^\ast\|^2.
\]
\fi 

Since 
\begin{equation}
P\left(\|w_p-w_s^\ast\|\leq \frac{2d\log(d/\delta)}{\lambda M \epsilon}\right) \geq 1-\delta_p
\end{equation}
from Lemma~\ref{lemma:gamma} with $k=d$ and $\theta = \frac{2}{\lambda M \epsilon}$,
we have the desired result.
\end{proof}

\subsection{Proof of Theorem~\ref{thm:performance 2}}

{\it
Theorem~\ref{thm:performance 2}: 
Let $w_0$ be any reference hypothesis. Then
with probability of at least $1-\delta_p-\delta_s$
over the privacy mechanism $(\delta_p)$ and over the choice of samples $(\delta_s)$,
\begin{eqnarray}
R(w_p) &\leq& R(w_0) 
+\frac{4d^2(c+\lambda) \log^2(d/\delta_p)}{\lambda^2 M^2 \epsilon^2} \nonumber \\
&& + \frac{16(32+\log(1/\delta_s))}{\lambda N} + \frac{\lambda}{2}\|w_0\|^2. 
\end{eqnarray}
}
\begin{proof}
Let $w_s$ and $w^\ast$ be the minimizers of the regularized empirical risk
(\ref{eq:weighted regularized empirical risk}) and the regularized expected risk
(\ref{eq:weighted regularized risk}), respectively. 
The risk at $w_p$ relative to a reference classifier $w_0$ can be written as
\begin{eqnarray}
R(w_p) - R(w_0) &=& R^{\lambda}(w_p) - R^{\lambda}(w^\ast) \nonumber\\
&& + R^{\lambda}(w^\ast) - R^{\lambda}(w_0) \nonumber \\
&& + \frac{\lambda}{2}\|w_0\|^2 -\frac{\lambda}{2}\|w_p\|^2 \nonumber\\
&\leq & R^{\lambda}(w_p) - R^{\lambda}(w^\ast) \nonumber +\frac{\lambda}{2}\|w_0\|^2.\\
\label{eq:thm5 ineq}
\end{eqnarray}
The inequality above follows from $R^{\lambda}(w^\ast) \leq R^{\lambda}(w_0)$ by definition. 
Note that since $\|x\|\leq 1$ and $|l'|\leq 1$ by assumption,
the weighted loss $\alpha(x)l(w^Tx) + (1-\alpha(x))l(w^Tx)$
is $1$-Lipschitz in $w$. 
From Theorem 1 of \cite{Sridharan:2009} with $a=1$,  
we can also bound $R^{\lambda}(w_p) - R^{\lambda}(w^\ast)$ as
\begin{eqnarray}
R^{\lambda}(w_p) - R^{\lambda}(w^\ast) &\leq& 2(R^{\lambda}_S(w_p) - R^{\lambda}_S(w_s^\ast))\nonumber \\
&& + \frac{16(32+\log(1/\delta_s))}{\lambda N}
\end{eqnarray}
with probability of $1-\delta_s$ over the choice of samples.
\if0
\begin{theorem}[Theorem 1, Sridharan]
For any $w$ and $\delta>0$, with probability of at least $1-\delta$ over the choice of $N$ samples, 
\begin{equation}
\end{equation}
\end{theorem}
\fi
By combining this inequality with Lemma~\ref{lemma:noise}
using the union bound, we have
\begin{eqnarray}
R^{\lambda}(w_p) - R^{\lambda}(w^\ast) &\leq& \frac{4d^2(c+\lambda) \log^2(d/\delta_p)}{\lambda^2 M^2 \epsilon^2} \nonumber\\
&& + \frac{16(32+\log(1/\delta_s))}{\lambda N}.
\end{eqnarray}
The theorem follows from (\ref{eq:thm5 ineq}).

\if0
Setting $\lambda = \frac{\epsilon_g}{\|w_0\|^2}$, we obtain
\begin{eqnarray}
R(w_p) &\leq& R(w_0) \\
&& + \frac{16\|w_0\|^4d^2 \log^2(d/\delta)(c+\epsilon_g/\|w_0\|^2)}{n^2\epsilon_g^2 \epsilon^2}\\
&& + O\left(\|w_0\|^2 \frac{\log(1/\delta)}{n \epsilon_g}\right) + \frac{\epsilon_g}{2}.
\end{eqnarray}
\fi
\end{proof}

\section{Differentially private multiclass logistic regression}\label{sec:appendix b}

We extend our methods to multiclass classification problems and
provide a sketch of $\epsilon$-differential privacy proofs for
multiclass logistic regression loss.

\subsection{Standard ERM} \label{sec:standard erm}
Suppose $y \in 1,...,K$, and let $w = [w_1;\;...\;; w_K]$ be a stacked 
$(d\;K) \times 1$ vector. 
The multiclass logistic loss ({i.e.} softmax) is
\begin{equation}
l(h(x),y) = -w_{y}^Tx + \log(\sum_l e^{w_l^Tx}),
\end{equation}
and the regularized empirical risk is 
\begin{equation}\label{eq:multiclass}
R^\lambda_S(w) = 
-\frac{1}{N}\sum_i  [w_{y_i}^Tx_i - \log(\sum_l e^{w_l^Tx_i})]+ \frac{\lambda}{2} \|w\|^2.
\end{equation}
Note that $R^\lambda_S(w)$ is $\lambda$-strongly convex in $w$.

The sensitivity of $w_s$ which minimizes (\ref{eq:multiclass}) can 
be computed as follows.
Suppose $S$ and $S'$ are two different datasets which are not necessarily 
neighbors: 
$S = \{(x_i,y_i))\}$ and $S' = \{(x'_i,y'_i)\}$.
Let $g(w)$ be the difference $R^\lambda_S(w) - R^\lambda_{S'}(w)$ of the
two risks. 
Then the partial gradient w.r.t. $w_k$ is
\begin{equation}
\nabla_{w_k} R^\lambda_S(w) = -\frac{1}{N}\sum_i x_i \Delta_k(x_i,y_i,w)  + \lambda w_k,
\end{equation}
where
\begin{equation}
\Delta_k(x_i,y_i,w) = I[y_i=k] - \frac{e^{w_k^Tx_i}}{\sum_l e^{w_l^Tx_i}}=
 I[y_i=k] - P_k(x_i).
\end{equation}
Since $I[y_i=k]$ can be non-zero (i.e. 1) for only one $k$, and
$\sum_k P_k(x_i) = 1$ with $0\leq P_k(x_i)\leq 1$, we have
\begin{equation}
\sum_k \Delta_k^2 = \sum_k (I_k - P_k)^2 \leq \sum_k (I_k^2 + P_k^2) \leq 2,
\end{equation}
Let $\Delta(x_i,y_i,w)=[\Delta_1(x_i,y_i,w),...,\Delta_K(x_i,y_i,w)]$ be a 
$K \times 1$ vector (which depends on $x_i,y_i,w$.) 
The gradient of the risk difference $g(w)$ is then
\begin{equation}
\nabla g(w) = 
-\frac{1}{N}\sum_i \Delta(x_i,y_i,w)\otimes x_i - \Delta(x_i',y_i',w)\otimes x_i',
\end{equation}
where $\otimes$ is a Kronecker product of two vectors.
Note that 
\begin{equation}
\|\Delta \otimes x\|^2 = \sum_k \|\Delta_k x\|^2 \leq \|x\|^2 \sum_k \Delta_k^2
\leq 2 \|x\|^2. 
\end{equation}

Without loss of generality, we assume that only $(x_1,y_1)$ and $(x'_1,y'_1)$ are
possibly different and $(x_i,y_i)=(x_i',y_i')$ for all $i=2,...,N$.
In this case we have
\begin{eqnarray}
\| \nabla g(w)\| & \leq & \frac{1}{N} \| \Delta(x_1,y_1,w)\otimes x_1\|\nonumber\\
&& + \frac{1}{N} \| \Delta(x'_1,y'_1,w)\otimes x'_1\| \nonumber\\
&\leq & \frac{\sqrt{2}}{N}(\|x_1\|+\|x'_1\|) \leq \frac{2\sqrt{2}}{N},
\end{eqnarray}
and the therefore the $L_2$ sensitive of the minimizer of a multiclass logistic
regression is
\begin{equation}
\frac{2\sqrt{2}}{N \lambda } 
\end{equation}
from Corollaries 7 and 8 \cite{Chaudhuri:2011:JMLR}.
Note that the sensitivity does not depend on the number of classes $K$.

\subsection{Majority-voted ERM}

Let $S=\{(x_i,v_i)\}$ and $S'=\{(x_i,v_i')\}$ be two datasets with the same
features but with possibly different labels for all $i=1,...,N$.
Then the partial gradient of the risk difference $g(w)$ is
\begin{eqnarray}
\nabla_{w_k} g(w) &=& -\frac{1}{N}\sum_i x_i[I[v_i=k]-I[v'_i=k]] \nonumber\\
&=& -\frac{1}{N}\sum_i x_i a_k(v_i,v_i'),
\end{eqnarray}
where $a_k(v_i,v_i')$ is
\begin{equation}
a_k(v_i,v'_i) = I[v_i=k]-I[v'_i=k] \in \{-1,0,1\}.
\end{equation}
Let $a = [a_1,...,a_K]$ be a $K \times 1$ vector (which depends on $v_i,v_i'$.) 
Note that at most two elements of $a$ can be nonzero (i.e.~$\pm 1$.)
The gradient can be rewritten using the Kronecker product $\otimes$ as
\begin{equation}
\nabla g(w) = -\frac{1}{N}\sum_i a(v_i,v_i') \otimes x_i, 
\end{equation}
and its norm is bounded by 
\begin{equation}
\|\nabla g(w) \| \leq \frac{1}{N} \sum_i \sqrt{2} \|x_i\| \leq \sqrt{2}.
\end{equation}
Therefore the $L_2$ sensitivity of the minimizer of majority-labeled multiclass
logistic regression is 
\begin{equation}
\frac{\sqrt{2}}{\lambda}.
\end{equation}

\subsection{Weighted ERM}

A natural multiclass extension of the weighted loss (\ref{eq:weighted loss}) is
\begin{equation}\label{eq:weighted loss multiclass}
l^\alpha(w) = \sum_k \alpha^k(x) l(w_k^Tx), 
\end{equation}
where $\alpha^k(x)$ is the unbiased estimate of the probability $P(v=k|x)$.
The corresponding weighted regularized empirical risk is
\begin{eqnarray}
R^\lambda_S(w) &=& \frac{1}{N}\sum_i \sum_k \alpha^k(x_i) l(w_k^Tx) + \frac{\lambda}{2}\|w\|^2\nonumber\\
&=&  \frac{1}{N}\sum_i \sum_k \alpha^k(x_i) [\log(\sum_l e^{w_l^Tx_i}) - w_k^T x_i ]\nonumber \\
&& + \frac{\lambda}{2}\|w\|^2 \nonumber\\
& = & -\frac{1}{N}\sum_i [\sum_k \alpha^k(x_i) w_k^T x_i - \log(\sum_l e^{w_l^Tx_i})]\nonumber\\
&& + \frac{\lambda}{2}\|w\|^2,
\end{eqnarray}
%
and its partial gradient is
\begin{equation}
\nabla_{w_k} R^\lambda_S(w) = 
-\frac{1}{N}\sum_i x_i \left [\alpha^k(x_i)  - \frac{e^{w_k^Tx_i}}{\sum_l e^{w_l^Tx_i}} \right] + \lambda w_k.
\end{equation}

\if0
\begin{eqnarray}
\|\nabla_{w_k} g(w)\| & \leq & \frac{1}{N}\sum_i \| x_i [\alpha_k(x_i)  - \alpha'_k(x_i)] \| \\
&\leq & \| x\|_{\max}\frac{2}{M},
\end{eqnarray}

\begin{eqnarray}
\| \nabla_{w} g(w)\|^2 & = & \sum_l \| \nabla_{w_l} g(w) \|^2 \\
&\leq& \sum_l (\| x\|_{\max}\frac{2}{M})^2 = \frac{4K}{M^2}.
\end{eqnarray}
Therefore the $L_2$ sensitive is $\frac{2\sqrt{K}}{M\lambda}$. 
\fi

Let $S=\{(x_i,\alpha_i)\}$ and $S'=\{(x_i,\alpha_i')\}$ be two datasets with
the same features but with possibly different labels for all $i=1,...,N$.
Then the partial gradient of the risk difference $g(w)$ is 
\begin{eqnarray}
\nabla_{w_k} g(w) &=& -\frac{1}{N}\sum_i x_i [\alpha^k(x_i) -(\alpha')^k(x_i)]\nonumber\\
&=& -\frac{1}{N}\sum_i x_i b_k(\alpha^k_i,(\alpha')^k_i), 
\end{eqnarray}
where $b_{k}(\alpha^k_i,(\alpha')^k_i) = \alpha^k(x_i)  - (\alpha')^k(x_i)$.
Let $b=[b_1,...,b_K]$ be a $K \times 1$ vector (which depends 
$\alpha_i,\alpha'_i$.)
Note that at most two elements of $b$ can be nonzero (i.e., $\pm 1/M$.)
The gradient can then be rewritten as
\begin{equation}
\nabla g(w) = -\frac{1}{N}\sum_i b(\alpha_i,\alpha_i') \otimes x_i, 
\end{equation}
and its norm is bounded by
\begin{equation}
\|\nabla g(w) \| \leq \frac{1}{N} \sum_i \frac{\sqrt{2}}{M} \|x_i\| \leq \frac{\sqrt{2}}{M}.
\end{equation}
Therefore the $L_2$ sensitivity of the minimizer of the weighted 
multiclass logistic regression is
\begin{equation}
\frac{\sqrt{2}}{M\lambda}.
\end{equation}

\subsection{Parameter averaging}

For the purposes of comparison, we also derive the sensitivity of parameter averaging \cite{Pathak:2010} for multiclass logistic regression.
Let the two neighboring datasets be  
$W = (w_1,w_2,...,w_M)$ and $W' = (w'_1,w'_2,...,w'_M)$, which are
collections of parameters from $M$ parties. 
The corresponding averages for the two sets are
$\bar{w} = \frac{1}{M} \sum_i w_i$ and  $\bar{w}' = \frac{1}{M} \sum_i w'_i$.
Without loss of generality, we assume the parameters $w_1$ and $w_1'$ differ only
for party 1 and $w_i = w'_i$ for others $i=2,...,M$. 
Since $\|\bar{w}-\bar{w}'\| = \frac{1}{M} \|w_1-w_1'\|$, 
the $L_2$ sensitivity is $1/M$ times the sensitivity of the minimizer of the minimizer of a single classifier, when all training samples of party 1 are allowed to change.
Therefore the $L_2$ sensitivity of the average parameters for multiclass 
logistic regression is 
$\frac{2\sqrt{2}}{M\lambda}.$

\if0
Generalization error for reformulated averaged classifier in a comparable form to other risks in this paper.
\[
R_S^\lambda (w) = \frac{1}{N} \sum_i l(w^T x_i) + \frac{\lambda}{2} \|w\|^2
\]
\fi

\end{document} 


\if0

\subsection{Independent topic: ERM with soft labels}

Question. 
It looks like 
$R^{\alpha}_{x}(w)$ and $R_{x,y}(w)$ are the same for $\alpha(x) = P(y=1|x)$.
However, 
$R^{\alpha}_{S}(w)$ and $R_S(w)$ are different. 
Is the minimizer of weighted risk better than the minimizer of standard risk?

Maybe we can compute the Rademacher complexity.

What I know. If $\phi: \mathbb{R} \to \mathbb{R}$ is $l$-Lipschitz, then
$ R_S(\phi\circ f) \leq l \cdot R_S(f)$. $f$ is real-valued functions.

\fi

\if0
\subsection{Objective perturbation}

Similar claim. Some modification to the proof of Theorem 9.

\[
E = \sum_i \left[ -v_i^2 l''(v_i w^Tx_i)x_ix_i^T + (v_i')^2l''(v_i'w^T_p x_i')x_i' (x_i')^T\right]
\]
\[
E = \sum_i \left[ -l''(v_i w^Tx_i) +l''(v_i'w^T_p x_i)\right]x_i x_i^T
\]
Lemma 10.
\[
\frac{\det(A+E)}{\det(A)}
= \det(I+A^{-1}E) = \prod_{i=1}^d (1 + \rho_i(A^{-1}E))
\]

\[
|\rho_j(A^{-1}E)| \leq \frac{|\rho_j(E)|}{n(\lambda+\Delta)}
\]

Trace-norm of $E$:?
\[
\sum_{j=1}^d |\rho_j(E)| \leq \sum_{i=1}^n 2c = (2c)^n.
\]

Determinant of $E$:?
\[
\sum_{j=1}^d |\rho_j(E)| = \sum_{i=1}^n 2c = (2c)^n.
\]

Eignevalue bounds? $E$ is a sum of rank-1 matrices, each with an max/min
eigenvalues of $\pm2c$. 
\[
|\rho_j(E)| \leq \sum_i \rho_j(\left[ -l''(v_i w^Tx_i) +l''(v_i'w^T_p x_i)\right]x_i x_i^T|) \leq 2nc,
\]
Why? Rayleigh-quotient definition.

Now,
\[
\frac{|\det(A+E)|}{|\det(A)|} \leq 
\prod_{i=1}^d \left(1 + \frac{|\rho_j(E)|}{n(\lambda+\Delta)}\right)
\leq \left( 1+\frac{2c}{\lambda+\Delta}\right)^d.
\]
This isn't good... I don't like the exponent $d$.
Maybe not true.

if $j=1$ or $j=..$???

Well, proceeding, we get

\[
\|b\|-\|b'\| \leq \sum_{i=1}^n \|(y_il'()-y_i'l'())x_i\| \leq 2n.
\]

Also, 
\[
\frac{P(b|S)}{P(b|S')} \leq e^{n\epsilon'}.
\]

Finally,
\[
\frac{P(w_p|S)}{P(w_p|S')} = 
\frac{P(b|S)}{P(b'|S')}\frac{|\det(A+E)|}{|\det(A)|}
\leq e^{\epsilon'n} e^{...}
\]

\subsection{Objective perturbation}

\begin{lemma}[Lemma 19]
\end{lemma}

\fi

\if0
\begin{table*}[htb]
\caption{Notations}
\label{tbl:notations}
\begin{center}
\begin{tabular}{|p{3.5in}|p{3in}|}
\hline
Symbol & Meaning \\ 
\hline
$D = P(x,y)$ & (standard) feature and label distribution\\
$l(w^Tx,v)$ & (standard) loss associated with a linear classifier\\
\hline
$S=\{(x_1,\alpha(x_1)),...,(x_N,\alpha(x_N))\}$ & $\alpha$-weighted training samples\\
$l(w^Tx,\alpha(x)) = \alpha(x)l(w^T x,1)+ (1-\alpha(x)) l(w^T x,-1)$
& $\alpha$-weighted loss associated with a linear classifier\\
\hline
$R_S(w) = \frac{1}{N} \sum_i l(w^Tx_i,\alpha(x_i))$ &  empirical risk w.r.t. the
training samples $S$ \\
$R(w) = E_{x} [l(w^Tx,\alpha(x))] = E_{x,v} [l(w^Tx,v)]$ & risk w.r.t. the distribution $P(x,v)$ \\
$R^\lambda_S(w) = R_S(w) + \frac{\lambda}{2}\|w\|^2$ & $l_2$ regularized empirical risk \\
$R^\lambda(w) = R(w) + \frac{\lambda}{2}\|w\|^2$ & $l_2$ regularized risk \\
$w_s^\ast = \arg\min_w R^\lambda_S(w)$ & regularized empirical risk minimizer \\
$w^\ast = \arg\min_w R^\lambda(w)$ & regularized risk minimizer \\
$w_p = w_s^\ast + \eta$ & private minimizer\\
$w_0$ & reference solution\\
\hline
\end{tabular}
\end{center}
\end{table*}
\fi

\if0
Breiman analyzed the classification error of Bagging using the notion of aggregate classifier\cite{Breiman}.
Let $P(y=j|x)$ denote the true probability of label equal to $j$, and $Q(y=j|x)$
denote the probability of $h(x)$  predicting $j$ given $x$:
\begin{equation}
Q(y=j|x) = P[h(x)=j|x].
\end{equation} 
The probability of correct classification with $h(x)$ is
\begin{equation}
r = E_x [\sum_j Q(j|x) P(j|x)],
\end{equation}
while the Bayes optimal rate is $r^\ast = E_x [\max_j P(j|x)]$.
The $h(x)$ is called order-correct at $x$ if 
\begin{equation}
\arg\max_j Q(j|x) = \arg\max_j P(j|x),
\end{equation}
that is, if $x$ is labeled $j$ more often than other classes, then $h(x)$ also 
predicts class $j$ more often than other classes. 
A order-correct classifier is not necessarily a Bayes optimal classifier.

An averaged classifier $h_A(x)$ is the majority voting in the limit $M\to\infty$, 
and predicts according to $\arg\max_j Q(j|x)$.
The probability of correct prediction for $h_A(x)$ at $x$ is
\begin{equation}
\sum_j I[h_A(x)=j] P(j|x).
\end{equation}
If $C \subset \mathcal{X}$ is the subset of the domain on which $h(x)$ is order-correct,
then the probability of correct prediction for $h_A(x)$ is
\begin{eqnarray}
r_A &=& \int_{x \in C} \max_j P(j|x)P(x)dx \nonumber\\
&& + \int_{x\notin C} [\sum_j I[h_A(x)=j]P(j|x) P(x)dx.
\end{eqnarray}
This shows that if $h(x)$ is order-correct for most of $\mathcal{X}$, then the averaged classifier $h_A(x)$ is
close to Bayes optimal while $h(x)$ may be far from the Bayes optimal. 
\footnote{Note that it also says the following: if $h(x)$ is mostly not order-correct, than $h_A(x)$ can be worse
than $h(x)$. However, we are no access to $P(x,y)$.}
\fi

\if0 
Suppose each device predicts $y_i(x)\in\{1,...,C\}$ given a sample $x$.
Let $\eta$ be a vector of independent Laplace noise 
$p(\eta) = \propto e^{-\frac{\|\eta\|_1}{\sigma}}$.
The noisy max $\hat{v}(x)$ returns
\begin{equation}
\hat{v} = \arg\max_c\; \sum_{i=1}^M I[y_i(x)=c]+\eta_c,\;c=1,...,K.
\end{equation}
\end{mechanism}
\fi

\if0
Perhaps the simplest demonstration of
the advantages of voting is the following.
Since the training sets are sampled independently, local classifiers
make statistically independent errors given $x$ with probability 
$p=P[h_i(x)\ne y|x],\;i=1,...M$.
The probability of error for majority voting given $x$ is then
\begin{equation}
P[v \ne y|x] = \sum_{j \geq \left\lceil{M/2}\right\rceil}^M {B}(M,j)\;p^j(1-p)^{M-j}.
\end{equation}
where $B(M,j)$ is the binomial coefficient.
If the local classifier error is $p<0.5$ for all $x$, 
then the majority voting error approaches zero exponentially fast as $M$ increases.
However, $p<0.5$ for all $x$ is does not hold in practice, and the ensemble of
`bad' classifiers can actually be worse \cite{Breiman:1996}. 
\fi

\if0
In particular, from Slud's inequality, if $p=(1-\delta)/2$, then
\begin{equation}
P(X \geq M/2) \geq 0.5(1-\sqrt{1-e^{-m\delta^2/(1-\delta^2)}}).
\end{equation}
\fi

\if0
A more recent results by Shapir et al.(boosting the margin) uses
a concept a margin.
Average function, where voting is a special case:
\[
\mathcal{C} = \{ f: x \mapsto \sum_{h \in \mathcal{H}} a_h h(x) \;|\; a_h \geq 0, \sum_h a_h = 1\}.
\]
Suppose the hypothesis space $\mathcal{H}$ of base classifier has
a VC-dimension $d$, and $N \geq d \geq 1$. 
For all average function $f \in \mathcal{C}$ and $\delta>0$, 
with probability of at least $1-\delta$ over the random choice of $S$, 
\begin{eqnarray}
P[yf(x)\leq 0] &\leq& P_S[yf(x)\leq \theta] \\
&+&O\left(\frac{1}{\sqrt{m}}\left( \frac{d\log^2(m/d)}{\theta^2}+\log(1/\delta)\right)^{1/2}\right).
\end{eqnarray}
That is the generalization error of an average average classifier
is bounded by the number of training samples with a small margin $\theta$ 
plus a term depends on both the margin and $m$ which vanishes with increasing $m$.
\cite{evegeniou?}
Despite these preceding works, generalization error of averaged classifiers
remains not fully answered, we are not proposing a new bound in this paper.
\fi

